\newtheorem{example}{Example}
\newtheorem{theorem}{Theorem}
\newtheorem*{theorem*}{Theorem}
\newtheorem{definition}{Definition}
\newtheorem{lemma}{Lemma}
\newtheorem{proposition}{Proposition}
\definecolor{Green}{HTML}{00A64F}
\newcommand{\boxlabel}[1]{\sffamily\Huge #1}
\tikzset{
	thick head/.style={>={stealth[width=6pt,length=8pt]}},
	thick path/.style={every path/.style={thick}},
	every picture/.style={
		>=stealth,thick path
	},
	gridbox/.style={draw=black,rectangle,minimum width=1cm,minimum height=1cm,font=\sffamily,fill=gray!10},
	slippery/.style={gridbox,font={\boxlabel{S}}},
	goal/.style={gridbox,fill=Green,font={\boxlabel{G}}},
	init/.style={gridbox,font={\boxlabel{I}}},
	limited/.style={gridbox,draw=red,fill=orange!40},
	obstacle/.style={gridbox,fill=red,font={\boxlabel{X}}},
	boxO/.style={gridbox},
	boxX/.style={obstacle},
	boxs/.style={slippery,font={\boxlabel{s}}},
	boxS/.style={slippery},
	boxL/.style={limited},
	boxt/.style={limited,font={\boxlabel{s}}},
	boxT/.style={limited,font={\boxlabel{S}}},
	boxG/.style={goal},
	boxI/.style={init},
	boxR/.style={gridbox,font={\boxlabel{$\rightarrow$}}},
	boxD/.style={gridbox,font={\boxlabel{$\downarrow$}}},
	boxU/.style={gridbox,font={\boxlabel{$\uparrow$}}},
	boxr/.style={limited,font={\boxlabel{$\rightarrow$}}},
	boxF/.style={gridbox,fill=orange,font={\boxlabel{F}}},
}
\title{Certified Policy Verification and Synthesis for MDPs under\\
	Distributional Reach-avoidance Properties}
\author{
	S.~Akshay$^1$\and
	Krishnendu Chatterjee$^2$\and
	Tobias Meggendorfer$^3$\and
	\DJ or\dj e \v{Z}ikeli\'c$^4$
	\affiliations
	$^1$Indian Institute of Technology Bombay, India\\
	$^2$Institute of Science and Technology Austria (ISTA), Austria\\
	$^3$Lancaster University Leipzig, Germany\\
	$^4$Singapore Management University, Singapore
	\emails
	akshayss@cse.iitb.ac.in, krishnendu.chatterjee@ist.ac.at, tobias@meggendorfer.de, dzikelic@smu.edu.sg
}
\begin{document}
\maketitle
\begin{abstract}
Markov Decision Processes (MDPs) are a classical model for decision making in the presence of uncertainty. Often they are viewed as state transformers with planning objectives defined with respect to paths over MDP states. An increasingly popular alternative is to view them as distribution transformers, giving rise to a sequence of probability distributions over MDP states. For instance, reachability and safety properties in modeling robot swarms or chemical reaction networks are naturally defined in terms of probability distributions over states. Verifying such distributional properties is known to be hard and often beyond the reach of classical state-based verification techniques.

In this work, we consider the problems of {\em certified} policy (i.e.~controller) verification and synthesis in MDPs under distributional reach-avoidance specifications. By certified we mean that, along with a policy, we also aim to synthesize a (checkable) certificate ensuring that the MDP indeed satisfies the property. Thus, given the target set of distributions and an unsafe set of distributions over MDP states, our goal is to either synthesize a certificate for a given policy or synthesize a policy along with a certificate, proving that the target distribution can be reached while avoiding unsafe distributions. To solve this problem, we introduce the novel notion of distributional reach-avoid certificates and present automated procedures for (1)~synthesizing a certificate for a given policy, and (2)~synthesizing a policy together with the certificate, both providing {\em formal guarantees} on certificate correctness. Our experimental evaluation demonstrates the ability of our method to solve several non-trivial examples, including a multi-agent robot-swarm model, to synthesize certified policies and to certify existing policies.

\end{abstract}

	

\section{Introduction}

\noindent{\bf State transformer view of MDPs.} Markov decision processes (MDPs) are a classical model for decision making in the presence of uncertainty. The prevalent view of MDPs defines them as {\em state transformers}. Under a policy that resolves non-determinism, an MDP defines a purely stochastic system that performs 
probabilistic moves from a state to another. 
This gives rise to a probability space over the set of all runs, i.e.~infinite sequences of states, in the MDP~\cite{DBLP:books/daglib/0020348}. MDPs are a central object of study within the AI, planning, and formal methods communities.
There is a rich body of work on scalable techniques for reasoning about various properties in MDPs such as discounted-sum and long-run average reward on one hand~\cite{DBLP:books/wi/Puterman94} and the computation of precise probabilities with which a qualitative objective is satisfied on the other hand, including model checking over expressive logics such as PCTL*~\cite{DBLP:conf/sfm/KwiatkowskaNP07}.

\smallskip\noindent{\bf Distribution transformer view of MDPs.}  While there is a lot of literature on analyzing MDPs as state transformers, there are several application domains where these approaches fall short. For instance, consider a path planning problem for a robot swarm consisting of an arbitrary number of robots distributed over a set of states. The states are arranged in a topology that has obstacles that must be avoided and a target set of states that must be reached. We want that at least 90\% of the robots must reach the target eventually, but at any intermediate step less than 10\% must be stuck in an obstacle. In other words, we want to synthesize a policy to control the robot swarm such that a {\em distributional reach-avoid property} defined with respect to distributions of robots is satisfied. To do this under state-based view, we would need to take the product of state spaces for each agent and define policies over this product space, which would be highly inefficient for systems with many agents.

An alternative to the state-based view is to view MDPs as {\em distribution transformers}. In this view, starting from some initial probability distribution over MDP states and under a policy that resolves non-determinism, the MDP at each time step induces a new distribution over states, giving rise to a sequence of distributions. Reasoning about this sequence provides a much more natural framework for controlling multi-agent systems under specifications defined in terms of positions of each agent. This allows one to define {\em distributional policies} that prescribe actions to be performed by each agent based on the current distribution of positions of all other agents, thus providing an effective and compact way for agents to “communicate” their positions to each other. Hence, in contrast to the state-based view, distributional policies are more convenient for controlling multi-agent systems with large number of agents where it suffices to only reason about distributions of their positions and not about positions of each individual agent. 
In addition to robot-swarms~\cite{BaldoniBMR08}, the distribution transformer view of MDPs also naturally arises in other applications such as bio-chemical reaction networks~\cite{DBLP:conf/qest/KorthikantiVAK10,HenzingerMW09} where populations/concentration of cells are distributed across states.

\smallskip\noindent{\bf Certification of policies.} In this work, we consider the problem of {\em automated and formal certification of correctness} of policies with respect to distributional reach-avoidance properties in MDPs. This is important in safety-critical applications including robot-swarms or bio-chemical reaction networks, where it is imperative to provide guarantees on correctness prior to policy deployment. In order to ensure safe and correct behavior of such systems, we are interested in computing policies together with {\em certificates} that serve as formal proofs of correctness of policies and allow for safe and trustworthy policy deployment. We ask the following research questions:
\begin{compactenum}
	\item {\em Certificates for policies.} What should be a certificate for formally reasoning about distributional reach-avoidance properties in MDPs? A good certificate should be an object that simultaneously allows {\em formal} and {\em automated} reasoning about its correctness.
	\item {\em Formal policy verification with certificates.} Given an MDP and a policy, how do we {\em compute} such a certificate that formally proves correctness of the policy?
	\item {\em Formal policy synthesis with certificates.} Given an MDP, how do we compute a policy together with a certificate that formally proves its correctness? Can we synthesize not only memoryless but also distributional policies?
\end{compactenum}

\smallskip\noindent{\bf Prior work and challenges.} Recent years have seen increased interest in formal analysis of MDPs under the distributional view. It was shown~\cite{DBLP:journals/ipl/AkshayAOW15} that the problem of deciding whether a policy is correct with respect to distributional reachability (and hence distributional reach-avoidance) properties is extremely hard; in fact as hard as the so-called Skolem problem, a long-standing number-theoretic problem whose decidability is unknown~\cite{Lipton22,DBLP:conf/rp/OuaknineW12}. Moreover, it was shown in~\cite{DBLP:journals/logcom/BeauquierRS06} that distributional properties such as reachability and safety cannot be expressed in PCTL*, hence classical model checking methods are not applicable to them. Over the years, the verification community has often studied MDPs under the distributional view, however existing works are either theoretical in nature and focus on decidability of the problem or its variants for different subclasses of MDPs~\cite{DBLP:journals/tse/KwonA11,DBLP:journals/jacm/AgrawalAGT15,DBLP:journals/logcom/BeauquierRS06,0001MS14,DBLP:conf/lics/AkshayGV18} or study specialized logics for reasoning about distributional properties~\cite{DBLP:journals/jacm/AgrawalAGT15,DBLP:journals/logcom/BeauquierRS06}. Existing automated methods are restricted to distributional safety~\cite{AkshayCMZ23}.

To the best of our knowledge, there exists no prior automated method for formal policy verification or synthesis in MDPs with respect to distributional reachability or reach-avoidance properties. Given the Skolem-hardness of the problem, a natural question to ask is how to address this problem in a way which provides formal correctness guarantees while at the same time being practically applicable. Motivated by the success of termination and safety analysis in program verification, we consider an over-approximative approach which may not terminate in all cases but which works in practice while preserving formal guarantees on the correctness of its outputs.

\smallskip\noindent{\bf Contributions.} Our contributions are as follows:
\begin{compactenum}
  \item {\em Certificate for distributional reach-avoidance.} We introduce the novel notion of {\em distributional reach-avoid certificates}, and show that they provide a sound and complete proof rule for distributional reach-avoidance  (Section~\ref{sec:certificates}).
  \item {\em Algorithms for formal verification and synthesis.} We develop novel {\em template-based synthesis algorithms} for the formal synthesis and verification problems with respect to distributional reach-avoidance properties in MDPs. 
    \begin{compactenum}
  	\item First, we develop an algorithm for synthesizing {\em memoryless policies} along with {\em affine} distributional reach-avoid certificates. Memoryless policies can be efficiently deployed and executed and are thus preferred in practice. The algorithm is {\em sound and relatively complete} for deciding the existence of and for computing a memoryless policy and an affine distributional reach-avoid certificate, whenever they exist. While our notion of distributional reach-avoid certificates in Section~\ref{sec:certificates} applies in the general case and provides a sound and complete proof rule for distributional reach-avoidance, our algorithm focuses on the family of {\em affine} distributional reach-avoid certificates for {\em practical reasons}, in order to allow for their fully automated and efficient computation. 
  	(Section~\ref{sec:algomemoryless}).
  	\item While memoryless policies are preferred in practice, they are not always sufficient for solving distributional reach-avoid tasks and one may even require unbounded memory. To that end, we show that it suffices to restrict to the so-called {\em distributionally memoryless policies} (Section~\ref{sec:certificates}) and develop an algorithm for synthesizing them together with affine distributional reach-avoid certificates. The algorithm is sound but incomplete (Section~\ref{sec:algodistributional}).
  	 \item Finally, in both cases, we also develop a certification algorithm that proves the correctness of a {\em given policy} by computing an affine distributional reach-avoidance certificate for it (Sections~5 and~6).
  \end{compactenum}
  \item {\em Experimental evaluation.} We implement a prototype of our approach and show that it is able to solve several distributional reach-avoid tasks, including robot-swarms in gridworld environments. Our prototype tool achieves impressive results even when restricted to memoryless strategies, thus showing the effectiveness of our approach as well as the generality of the relative completeness guarantees provided by our first algorithm (Section~\ref{sec:experiments}).
\end{compactenum}

\smallskip\noindent{\bf Related work.}
Unlike in the distributional case discussed above, probabilistic reach-avoidance over MDP states is solvable in polynomial time~\cite{DBLP:books/daglib/0020348} and formal policy synthesis for state properties in finite MDPs has been extensively studied. In addition, recent years have seen increased interest in formal policy synthesis~\cite{SoudjaniGA15,LavaeiKSZ20,cauchi2019stochy,BadingsA00PS22,BadingsRA023,XLZF21,ZikelicLHC23,ZikelicLVCH23,DBLP:conf/concur/GroverKMW22} and certification of policies~\cite{AlshiekhBEKNT18,0001KJSB20,LechnerZCH22} for continuous-state MDPs. However, none of these methods are applicable to the {\em distributional} reach-avoidance problem.

Of works considering distributional properties, the most closely related is the recent work of~\cite{AkshayCMZ23} which considers distributional safety and also proposes a template-based synthesis method. However, our work differs in three important ways. First, our method supports distributional reachability and reach-avoidance. Formal analysis of distributional reachability and reach-avoidance, and even the very definition of a certificate, is significantly more involved and as a result the proofs of our Theorems~\ref{stm:dist_strat_is_enough} and~\ref{thm:soundandcomplete} are more challenging than the distributional safety setting. Second, we consider both {\em universal and existential} distributional problems (see Section~\ref{sec:problem} for definition), whereas they only considered the existential case, i.e.\ for a single initial distribution. Third, our automated method allows target and safe sets to be specified {\em both} in terms of strict and non-strict inequalities. This is one of the highlighted open problems in~\cite[Section~8]{AkshayCMZ23}. 

The template-based synthesis approach has also been extensively used for controller synthesis for state properties in deterministic~\cite{jarvis2003some,AhmadiM16} and stochastic systems~\cite{PrajnaJP07}, as well as in program analysis~\cite{GulwaniSV08}. In particular, our distributional reach-avoid certificate draws insights from ranking functions for termination~\cite{ColonS01} and invariants for safety analysis in programs~\cite{ColonSS03}.










\section{Preliminaries}
\label{sec:prelim}
%
A \emph{probability distribution} on a (countable) set $X$ is a mapping $\distribution : X \to [0,1]$, such that $\sum_{x\in X} \distribution(x) = 1$. We write $\support(\distribution) = \{x \in X \mid \distribution(x) > 0\}$ to denote its \emph{support}, and $\Distributions(X)$ to denote the set of all probability distributions on~$X$.

\smallskip\noindent{\bf Markov decision processes.}
%
A \emph{Markov decision process (MDP)} is a tuple $\MDP = (\States, \Actions, \transitions)$, where
	$\States$ is a finite set of \emph{states},
	$\Actions$ is a finite set of \emph{actions}, overloaded to yield for each state $s$ the set of \emph{available actions} $\Actions(s) \subseteq \Actions$, and
	$\transitions: \States \times \Actions \to \Distributions(\States)$ is a \emph{transition function} that for each state $s$ and (available) action $a \in \Actions(s)$ yields a probability distribution over successor states.
A {\em Markov chain} is an MDP where each state only has a single available action.

An infinite path in an MDP is a sequence $\infinitepath = s_1 a_1 s_2 a_2 \cdots \in (\States \times \Actions)^\omega$ such that $a_i \in \Actions(s_i)$ and $\transitions(s_i,a_i, s_{i+1}) > 0$ for every $i \in \Naturals$. A {\em finite path} is a finite prefix of an infinite path.
We use $\infinitepath_i$ and $\finitepath_i$ to refer to the $i$-th state in the given (in)finite path, and $\IPaths<M>$ and $\FPaths<M>$ for the set of all (in)finite paths of $M$.

Dynamics of MDPs are defined in terms of policies. A {\em policy} in an MDP is a map $\strategy : \FPaths<\MDP> \to \Distributions(\Actions)$, which given a finite path $\finitepath = s_0 a_0 s_1 a_1 \dots s_n$ yields a probability distribution $\strategy(\finitepath) \in \Distributions(\Actions(s_n))$ on the actions to be taken next. A policy is {\em memoryless} if the probability distribution over actions only depends on the current state and not on the whole history, i.e.~if $\strategy(\finitepath) = \strategy(\finitepath')$ whenever $\finitepath$ and $\finitepath'$ end in the same state.
Fixing a policy $\strategy$ and initial distribution $\distribution_0$ induces 
a unique probability measure $\ProbabilityMC<\MDP^\strategy, \distribution_0>$ over infinite paths of $\MDP$~\cite{DBLP:books/wi/Puterman94}.



\smallskip\noindent{\bf MDPs as distribution transformers.}
MDPs are traditionally viewed as {\em random generators} of paths, and one investigates the (expected) behaviour of a generated path, i.e.\ path properties.
However, in this work we treat probabilistic systems as \emph{(deterministic) transformers of distributions}.

First, fix a Markov chain $\MC$.
For a given initial distribution $\distribution_0$, we define the distribution at step $i$ by $\distribution_i(s) = \ProbabilityMC<\distribution_0>[\{\infinitepath \in \IPaths<\MC> \mid \infinitepath_i = s\}]$, i.e.\ the probability to be in state $s$ at step $i$.
We write $\distribution_i = \MC(\distribution_0, i)$ for the $i$-th distribution and $\distribution_1 = \MC(\distribution_0)$ for the {\em one-step} application of this transformation.
Likewise, we obtain the same notion for an MDP $\MDP$ combined with a policy $\strategy$, and write $\distribution_i = \MDP^\strategy(\distribution_0, i)$, $\distribution_1 = \MDP^\strategy(\distribution_0)$.
In summary, for a given initial distribution, a Markov chain induces a unique stream of distributions, and an MDP provides one for each policy.
This naturally invites questions related to this induced stream of distributions.
In their path interpretation, queries on MDPs such as \emph{reachability} or \emph{safety}, i.e.\ asking the probability of reaching or avoiding a set of states, allow for simple, polynomial time solutions \cite{DBLP:books/wi/Puterman94,DBLP:books/daglib/0020348}.
However, the corresponding problems in the space of distributions are surprisingly difficult. Our goal is to enable efficient and fully automated reachability and safety analyses under the distribution transformer interpretation of Markov chains and MDPs. 
We start with an example.

\begin{figure}[t]
\begin{center}
	\begin{tikzpicture}
		\foreach [count=\i from 0,evaluate=\i as \x using {mod(\i,4)},evaluate=\i as \y using -floor(\i / 4)] \l in {
			init,obstacle,gridbox,goal,%
			slippery,limited,slippery,gridbox%
		}
			\node[\l] at (\x,\y) {};
	\end{tikzpicture}
	\caption{Gridworld example.}
	\label{fig:grid-running}
\end{center}
\end{figure}
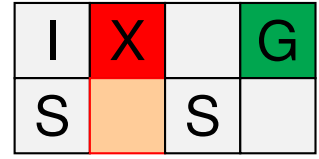

\begin{example}
  \label{eg:grid-running}
  Consider a $2\times 4$ grid as depicted in Figure~\ref{fig:grid-running}. Let us say that a robot swarm starts at the initial cell marked $I$ in the top-left of the grid. From any cell any robot can move horizontally or vertically to an adjacent cell via non-deterministic moves, as long as the adjoining cell is not marked $X$. Cells marked $X$ are obstacles that cannot be moved to. Cells marked $S$ are stochastic where $10\%$ of the robots remain in the cell while remaining $90\%$ can move to adjoining cell. Further, orange cells are distributional obstacles, i.e.\ at any point only $10\%$ of all robots in the swarm may be in the set of orange cells. One could visualize this as a narrow path that ensures that only few robots can go through it safely at any point. Finally, there is a goal cell $G$. The problem is then to go from $I$ to $G$ (at least $90\%$ of the swarm must reach $G$), while the dynamics must follow the stochastic constraints in $S$ cells, and at the same time avoid obstacles $X$ and satisfy distributional constraints in orange cells.
\end{example}

\section{Distributional Reach-avoidance Problems}\label{sec:problem}
Consider a \emph{target set} of distributions $T$ and a \emph{safe set} of distributions $H$, i.e.~complement of the set of unsafe distributions. We consider a safe set rather than its complement for the simplicity of the presentation. Distributional reach-avoidance is concerned with verifying or computing an MDP policy under which the induced stream of distributions stays in $H$ until it reaches $T$. There are several natural variants of the problem, depending on whether policy and initial distribution are given. 

Consider an MDP $\MDP = (\States, \Actions, \transitions)$, a set of initial distributions $\Init \subseteq \Distributions(\States)$,$T \subseteq \Distributions(\States)$, and $H \subseteq \Distributions(\States)$. Let $\pi$ be a (memoryless/general) policy for $\MDP$. Then:
\begin{compactitem}
	\item For $\distribution_0\in \Init$, we say that the MDP $\MDP$ satisfies \emph{$(T,H)$-reach-avoidance from $\distribution_0$ under policy $\strategy$}, if there exists $i \geq 0$ such that $\MDP^\strategy(\distribution_0, i) \in T$ and if for all $0 \leq j < i$ we have $\MDP^\strategy(\distribution_0, j) \in H$. When $\mu_0$ is fixed, we also say that $\MDP$ satisfies {\em unit-$(T,H)$-reach-avoidance under $\pi$}.
	\item We say $\MDP$ satisfies \textit{existential-$(T,H)$-reach-avoidance under $\strategy$} if there exists a distribution $\distribution_0 \in \Init$ such that $\MDP$ satisfies $(T,H)$-reach-avoidance from $\distribution_0$ under $\strategy$. 
	\item We say MDP $\MDP$ satisfies \textit{universal-$(T,H)$-reach -avoidance under $\strategy$} if for all $\distribution_0 \in \Init$, $\MDP$ satisfies $(T,U)$-reach-avoidance from $\distribution_0$ under $\strategy$.
	\end{compactitem}
Existential and universal reach-avoidance are relevant when the input distribution is not precise. While universal reach-avoidance is more restrictive than existential, it is motivated by robustness questions, where we do not precisely know the initial distribution so we ask that all distributions belonging to some uncertainty set behave in a certain way. We are now ready to formally define the problems that we consider:
\begin{mdframed}
	\begin{center}
		\textbf{Distributional Reach-avoidance}
	\end{center}%
	\noindent Given an MDP $\MDP = (\States, \Actions, \transitions)$, a set of initial distributions $\Init \subseteq \Distributions(\States)$, $\mu_0\in \Init$, a target set $T \subseteq \Distributions(\States)$, and a safe set $H \subseteq \Distributions(\States)$:
	
	\indent (i) \textbf{The (unit/existential/universal) Policy Certification problem} asks if, given a policy $\pi$, there exists a certificate $\certificate$ which ensures that $\MDP$ satisfies (unit/ex-istential/universal) $(T,H)$-reach-avoidance under $\pi$ and allows automated reasoning about certificate correctness. We refer to the pair $(\pi,\certificate)$ as a {\em certified policy}.
	
	\indent (ii) \textbf{The (unit/existential/universal) Policy Verification problem} asks to, given a policy $\pi$, compute a certificate $\certificate$ such that $(\pi,\certificate)$ as a certified policy.
	
	\indent (iii) \textbf{The (unit/existential/universal) Policy Synthesis problem} asks to compute a policy $\pi$ {\em together} with a certificate $\certificate$ such that $(\pi,\certificate)$ as a certified policy.
\end{mdframed}        

We note that these problems are not yet fully well defined: Since $\Distributions(\States)$ is an uncountably infinite space, we need to describe how the sets $\Init$, $T$, and $H$ are represented.
As common for continuous scenarios, we restrict to \emph{affine} sets.
Concretely, we assume that these sets are specified via a finite number of affine inequalities and are of the form
\begin{align*}
	T     & = \Big\{\distribution \in \Distributions(\States) \mid {\bigwedge}_{j=1}^{N_T} (t_0^j + {\sum}_{i=1}^n t_i^j \cdot \distribution(s_i)) \bowtie 0 \Big\},       \\
	H     & = \Big\{\distribution \in \Distributions(\States) \mid {\bigwedge}_{j=1}^{N_U} (h_0^j + {\sum}_{i=1}^n h_i^j \cdot \distribution(s_i)) \bowtie 0 \Big\},       \\
	\Init & = \Big\{\distribution \in \Distributions(\States) \mid {\bigwedge}_{j=1}^{N_{\Init}} (a_0^j + {\sum}_{i=1}^n a_i^j \cdot \distribution(s_i)) \bowtie 0 \Big\},
\end{align*}
where $S=\{s_1,\dots,s_n\}$ are MDP states, $t^i_j$, $h_i^j$, and $a_i^j$ are real-valued constants, ${\bowtie} \in \{\geq,>\}$, and $N_T$, $N_H$, $N_{\Init}$ are resp.~numbers of affine inequalities defining $T$, $H$, and $\Init$.

\smallskip\noindent{\bf Hardness.} Note that Policy Verification and Synthesis problems ask only to compute a certificate for a given policy (or policy and certificate), and not to decide their existence. The reason is that, as it turns out, both decision problems are computationally hard even in the setting of affine sets. This hardness emerges already in the case of (memoryless) policy verification for distributional reachability: Even if $M$ is a Markov chain, $|\Init| = 1$, $T = \{\distribution \mid \distribution(s_1) = \frac{1}{4}\}$, and $H = \Distributions(\States)$, the problem is  \textsc{Skolem}-hard\footnote{Intuitively, the \textsc{Skolem} problem asks for a given rational (or integer or real) matrix $M$, whether there exists $n\in\mathbb{N}$, such that $(M^n)_{1,1}=0$  \cite{DBLP:conf/rp/OuaknineW12}. This problem (and its variants) has been the subject of intense research over the last 40 years, see e.g.\ \cite{Lipton22}. Yet, quite surprisingly, it still remains open, even for matrices of size 5 and above.}~\cite{DBLP:journals/ipl/AkshayAOW15}. 
\begin{proposition}\label{prop:hardness}
	The decision problem variants of the unit/existential/universal Policy Verification and Synthesis problems for MDPs with respect to distributional reach-avoidance are as hard as the Skolem problem.
\end{proposition}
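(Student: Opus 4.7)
The plan is to obtain Skolem-hardness by reduction from the known hardness result of \cite{DBLP:journals/ipl/AkshayAOW15} for distributional reachability in Markov chains, and then argue that this single reduction simultaneously establishes hardness for all six problem variants listed in the statement (three quantifier patterns $\times$ two problem types).

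First, I would recall the precise form of the result of \cite{DBLP:journals/ipl/AkshayAOW15}: it is already known that, given a Markov chain $\MC$, a single initial distribution $\mu_0$, and a rational threshold $r$, deciding whether there exists $i \in \Naturals$ with $\MC(\mu_0,i)(s_1) = r$ is Skolem-hard. I would instantiate the reduction by viewing $\MC$ as an MDP $\MDP$ with exactly one available action per state, taking $\Init = \{\mu_0\}$, setting $T = \{\mu \mid \mu(s_1) \ge r \wedge -\mu(s_1) \ge -r\}$ (an equality encoded via two non-strict affine inequalities, matching the admitted template for $T$), and $H = \Distributions(\States)$ (also affine, via the trivial inequality $1 \ge 0$). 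Under these choices, $(T,H)$-reach-avoidance from $\mu_0$ under the unique policy holds iff some distribution in the induced stream satisfies $\mu_i(s_1) = r$, so the Skolem problem reduces to our unit instance.

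Next, I would lift the hardness to the other quantifier variants and to synthesis. Because $|\Init| = 1$ in the reduction, the unit, existential, and universal formulations coincide on this instance, and the quantifier-pattern variants of Policy Verification inherit the hardness. Because $\MDP$ is a Markov chain, there is exactly one policy, so the Synthesis decision problem (``does there exist a certified policy?'') coincides with the Verification decision problem on the fixed unique policy, and hardness transfers. Here I would use soundness and completeness of the certificate notion (Theorem~\ref{thm:soundandcomplete}, invoked as a black box) to argue that the decision version ``does a certificate $\certificate$ exist?'' is equivalent to ``does $\MDP$ satisfy $(T,H)$-reach-avoidance under $\pi$?''; without this step, the decision problem about certificates could be a priori easier or harder than the underlying reach-avoidance question.

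The only subtle point is the side condition that the sets $T$, $H$, $\Init$ respect the prescribed affine template with $\mathord{\bowtie} \in \{\ge,>\}$: I would verify that an equality $\mu(s_1) = r$ can be written as the conjunction of two non-strict $\ge$-inequalities, that $H = \Distributions(\States)$ is expressible by the vacuous inequality, and that $\Init = \{\mu_0\}$ is affine as the intersection of $2|\States|$ non-strict inequalities pinning each coordinate. I expect this bookkeeping to be the only minor obstacle; the main content of the argument is entirely carried by the cited Skolem-hardness of \cite{DBLP:journals/ipl/AkshayAOW15} combined with the completeness side of Theorem~\ref{thm:soundandcomplete} to translate from certificates back to the semantic reach-avoidance property.
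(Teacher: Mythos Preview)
Your proposal is correct and follows essentially the same approach as the paper: the paper's argument is simply the paragraph preceding the proposition, which observes that the instance with $\MDP$ a Markov chain, $|\Init|=1$, $T=\{\mu\mid\mu(s_1)=\tfrac14\}$, and $H=\Distributions(\States)$ is already Skolem-hard by~\cite{DBLP:journals/ipl/AkshayAOW15}, and that this single instance covers all listed variants. Your write-up is more explicit about why the six variants collapse (singleton $\Init$, unique policy) and about the affine encodability of $T$, $H$, $\Init$; your appeal to Theorem~\ref{thm:soundandcomplete} to equate ``certificate exists'' with ``reach-avoidance holds'' is a careful addition the paper omits (at the point where Proposition~\ref{prop:hardness} is stated, the certificate notion has not yet been formalized, so the paper implicitly reads the decision problem semantically).
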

As a result, we cannot expect to obtain an efficient, sound and complete decision procedure for our problem. To overcome this, we focus on asking for certificates and policies of a certain {\em special} form, and come up with a sound and \emph{relatively} complete procedure to synthesize them, as explained below.

\section{Proving Distributional Reach-avoidance}\label{sec:certificates}
We now consider the Policy Certification problem discussed in Section~\ref{sec:problem}. First, we show that in order to reason about distributional reach-avoidance, it suffices to restrict to the so-called {\em distributionally memoryless policies}. Second, we introduce our novel certificate for formally proving distributional reach-avoidance, which we call {\em distributional reach-avoid certificate}. We show that distributional reach-avoid certificates provide a sound and complete proof rule for proving distributional reach-avoidance under distributionally memoryless policies. 

\smallskip\noindent{\bf Distributionally memoryless policies.}
Let $\MDP = (\States, \Actions, \transitions)$ be an MDP, $\strategy: \FPaths<\MDP> \to \Distributions(\Actions)$ a policy and $\mu_0 \in \Distributions(\States)$ an initial distribution. Let $\distribution_0,\distribution_1,\dots$ be the stream of distributions induced by $\strategy$ from $\mu_0$. We say that $\strategy$ is {\em distributionally memoryless}, if for any initial distribution $\mu_0 \in \Distributions(\States)$ and for any two finite paths $\finitepath = s_0 a_0 s_1 a_1 \dots s_n$ and $\finitepath' = s_0' a_0' s_1' a_1' \dots s_m'$ with $\distribution_n = \distribution_m$, we have $\strategy(\finitepath) = \strategy(\finitepath')$. Thus, probability distribution over actions only depends on the current distribution over states and not on the whole history. The following theorem shows that, in order to reason about distributional reach-avoidance, it suffices to restrict to distributionally memoryless policies. The proof of the following theorem can be found in Appendix~\ref{app:distmem}.

\begin{theorem} \label{stm:dist_strat_is_enough}
	Let $T,H \subseteq \Distributions(\States)$ be target and safe sets. MDP $\MDP$ satisfies unit/existential/universal-$(T,H)$-reach-avoidance under some policy if and only if there exists a distributionally memoryless policy $\strategy$ such that $\MDP$ satisfies satisfies unit/existential/universal-$(T,H)$-reach-avoidance under $\strategy$.
\end{theorem}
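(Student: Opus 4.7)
The plan is as follows. The $(\Leftarrow)$ direction is immediate, since a distributionally memoryless policy is in particular a policy. The substance lies in $(\Rightarrow)$: given any policy $\pi$ witnessing (unit/existential/universal) $(T,H)$-reach-avoidance, I would construct a distributionally memoryless $\pi'$ with the same property.

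First I would abstract away the state-level dynamics and work purely in distribution space. A one-step transition of the MDP is parametrised by a per-state action distribution $\sigma : \States \to \Distributions(\Actions)$ with $\sigma(s) \in \Distributions(\Actions(s))$; applying $\sigma$ to a state-distribution $\mu$ yields a unique successor $\mu'_\sigma$. A distributionally memoryless policy corresponds exactly to a choice of such $\sigma_\mu$ for each $\mu \in \Distributions(\States)$. With this view, I would define a Bellman-style distance $d : \Distributions(\States) \to \Naturals \cup \{\infty\}$ via the least-fixpoint iteration $D_0 = T$ and $D_{k+1} = D_k \cup \{\mu \in H \setminus T \mid \exists \sigma,\ \mu'_\sigma \in D_k\}$, setting $d(\mu) = \min\{k \mid \mu \in D_k\}$, or $d(\mu) = \infty$ when no such $k$ exists. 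I would then define $\pi'(\mu) := \sigma_\mu$ for some fixed witness $\sigma_\mu$ that $\mu \in D_{d(\mu)}$ whenever $0 < d(\mu) < \infty$, and let $\pi'(\mu)$ be arbitrary otherwise. This $\pi'$ is distributionally memoryless by construction.

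The core of the argument then reduces to two claims. First, if $\pi$ achieves reach-avoidance from $\mu_0$ with $\mu_k \in T$ and $\mu_j \in H$ for all $j < k$, then $d(\mu_0) \leq k$; I would prove this by induction on $k$, using the effective per-step action distribution $\sigma_j$ that $\pi$ induces on the stream at step $j$ (obtained by marginalising $\pi$'s randomisation over paths ending in a given state) as a witness that $\mu_j \in D_{k-j}$. Second, from any $\mu$ with $d(\mu) < \infty$, the $\pi'$-stream decreases $d$ by exactly one per step, reaches $T$ at step $d(\mu)$, and has every strictly intermediate distribution in $H \setminus T$ (the $H \setminus T$ side-condition in the definition of $D_{k+1}$ forces $\mu_i \in H$ whenever $0 < d(\mu_i) < \infty$). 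Combining these, $\pi'$ witnesses reach-avoidance from every $\mu_0$ at which $\pi$ did; the unit and existential cases follow by taking the same witnessing $\mu_0$, and the universal case follows because $\pi'$ is defined uniformly over the whole distribution space and thus simultaneously handles every $\mu_0 \in \Init$.

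The hard part will be the universal case, though only conceptually. Naively one might try to construct $\pi'$ by distilling $\pi$ along the trajectories it actually produces; this leads to conflicts when two distinct initial distributions $\mu_0 \neq \mu_0'$ generate trajectories that share an intermediate distribution $\mu$ but would prefer different action distributions at $\mu$. My distance-based construction avoids this by making the choice at $\mu$ depend only on $\mu$ itself (via the globally defined $d$) and not on how $\mu$ was reached. A minor technical point is that $\Distributions(\Actions(s))$ is uncountable; however, since $d$ is $\Naturals \cup \{\infty\}$-valued, a witness $\sigma_\mu$ always exists whenever required and an arbitrary such choice can be made (invoking the axiom of choice if desired), which suffices for an existence statement.
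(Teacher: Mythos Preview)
Your proposal is correct and takes a genuinely different route from the paper's proof.

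The paper proves the unit/existential case and the universal case by two separate constructions. For unit/existential, it first picks a witnessing policy $\pi'$ that reaches $T$ in the \emph{minimum} number of steps, then argues by contradiction (via a cycle-removal construction) that all distributions $\mu_0,\dots,\mu_N$ along the induced stream are pairwise distinct; distinctness is what makes it legitimate to define the distributionally memoryless policy by reading off the effective per-step action distribution at each $\mu_t$. For the universal case, the paper first refines $\pi'$ so that each stream has distinct distributions, then defines a distance to $T$ \emph{relative to the streams actually produced by $\pi'$}, picks for each reachable $\mu$ a successor $\textsc{Next}(\mu)$ along a distance-minimizing stream, and builds the distributionally memoryless policy from these successors.

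Your construction is more uniform: you define a single Bellman-style backward-reachability distance $d$ over \emph{all} one-step action profiles $\sigma$ (not just those realized by the given $\pi$), and build $\pi'$ once and for all by choosing, at each $\mu$ with $0<d(\mu)<\infty$, a $\sigma_\mu$ that decreases $d$. This eliminates the need for the distinctness argument and for the separate treatment of the universal case: since $d$ and $\pi'$ are defined globally on $\Distributions(\States)$, the same $\pi'$ simultaneously witnesses reach-avoidance from every $\mu_0$ with $d(\mu_0)<\infty$. The marginalisation step you describe (extracting $\sigma_j$ from $\pi$ at step $j$) is exactly what the paper uses locally, but you only need it to bound $d(\mu_0)$, not to build $\pi'$ itself. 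The price is a slightly more abstract setup and an appeal to choice over $\Distributions(\States)$; the gain is a shorter, case-free argument that makes the underlying fixpoint structure explicit.
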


\noindent{\bf Sound and complete certificate.} Intuitively, given a target set of distributions $T$ and a safe set of distributions $H$, a distributional $(T,H)$-reach-avoid certificate under policy $\strategy$ is a pair $\certificate = (R, I)$ of a {\em distributional ranking function} $R$ and a {\em distributional invariant $I$}. The distributional invariant $I$
is a set of distributions that is required to contain all distributions that are reachable under policy $\strategy$ and also to be contained in $H$, while the distributional ranking function $R$ is a function that maps distributions over MDP states to reals, which is required to be nonnegative at all distributions contained in $I$ and to decrease by at least~$1$ after every one-step evolution of the MDP until the target $T$ is reached. We formalize this intuition.

\begin{definition}[Distributional reach-avoid certificate]\label{def:certificate}
  Let $\MDP$ be an MDP, $\distribution_0 \in \Init$ be an initial distribution and $\strategy$ be a policy.  A {\em distributional $(T,H)$-reach-avoid certificate} for $\MDP$ from $\distribution_0$ under $\strategy$ is a pair $(R,I)$, comprising of a function $R: \Distributions(\States) \rightarrow \mathbb{R}$ and a set of distributions $I \subseteq \Distributions(\States)$ such that the following conditions hold:
	\begin{compactenum}
	\item {\em Initial distribution in $I$.} We have $\distribution_0 \in I$.
		\item {\em Inductiveness of $I$ until $T$}. The set $I$ is {\em closed} under application of $\MDP^\strategy$ to any non-target distribution contained in $I$, i.e.~$\MDP^{\strategy}(\distribution) \in I$ holds for every $\distribution \in I \backslash T$.
		\item {\em Safety.} $I$ is a subset of the safe set $H$, i.e.~$I \subseteq H$.
		\item {\em Nonnegativity of $R$.} For every $\distribution \in I$, we have $R(\distribution) \geq 0$.
		\item {\em Strict decrease of $R$ until $T$.} For every $\distribution \in I \backslash T$, we have $R(\distribution) \geq R(\MDP^{\strategy}(\distribution)) + 1$.
	\end{compactenum}
        Furthermore, a distributional $(T,U)$-reach-avoid certificate for $\MDP$ under $\strategy$ is said to be {\em universal}, if it satisfies conditions~2-5 and in addition condition~1 is strengthened to $\Init \subseteq I$, i.e.~the set $I$ must contain all distributions in $\Init$.
\end{definition}
The following theorem establishes that distributional reach-avoid certificates provide a sound and complete proof rule for proving distributional reach-avoidance in MDPs under distributionally memoryless policies. The intuition behind the proof is as follows. Take the distribution transformer view of MDPs and consider the stream $\mu_0,\mu_1,\mu_2,\dots$ of distributions over MDP states induced by starting in $\mu_0$ and repeatedly applying policy $\pi$. Then conditions 1-3 in Definition~\ref{def:certificate} together ensure that distributions in the stream stay in $H$ at least until $T$ is reached. On the other hand, conditions $4$ and $5$ in Definition~\ref{def:certificate} together ensure that a distribution in $T$ must be eventually reached since $R$ cannot be decreased by $1$ indefinitely while remaining non-negative. Hence, $T$ is eventually reached while $H$ is not left in the process, and distributional reach-avoid certificates provide a sound proof rule. To prove completeness, we simply let $I = \{\mu_0,\mu_1,\mu_2,\dots\}$ be the stream of induced distributions, $k$ be the smallest index such that $\mu_k \in T$ and define $R(\mu_i) = \max\{0, k-i\}$. One can then verify that $(R,I)$ is indeed a correct distributional reach-avoid certificate. We defer the formal proof to Appendix~\ref{app:certificate}.

\begin{theorem}[Sound and complete certificates] \label{thm:soundandcomplete}
  Let $\MDP$ be an MDP, $\distribution_0 \in \Init$ and $\strategy$ be a distributionally memoryless policy. Then $\MDP$ satisfies
  \begin{compactenum}
  \item unit-$(T,H)$-reach-avoidance under $\strategy$ iff there exists a distributional $(T,H)$-reach-avoid certificate for $\MDP$ from $\distribution_0$ under $\strategy$.
  \item existential-$(T,H)$-reach-avoidance under $\strategy$ iff there exists a $\mu_0\in\Init$ and distributional $(T,H)$-reach-avoid certificate for $\MDP$ from $\distribution_0$ under $\strategy$.
  \item universal-$(T,H)$-reach-avoidance under $\strategy$ iff there exists a universal distributional $(T,H)$-reach-avoid certificate for $\MDP$ under $\strategy$.
    \end{compactenum}
\end{theorem}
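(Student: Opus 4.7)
The plan is to prove all three parts by first establishing the unit case in full and then deriving parts~2 and~3 by small adaptations. Throughout, let $\mu_0, \mu_1, \mu_2, \ldots$ with $\mu_i = \MDP^\strategy(\mu_0, i)$ denote the induced stream; since $\strategy$ is distributionally memoryless, the successor relation $\mu_{i+1} = \MDP^\strategy(\mu_i)$ is well-defined as a function of $\mu_i$ alone, which is the crucial structural fact I will exploit in the completeness direction.

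For soundness in the unit case, given a certificate $(R, I)$ I would first argue by induction on $i$ that either some $\mu_j$ with $j \leq i$ lies in $T$, or $\mu_i \in I$. The base case is condition~1 and the inductive step invokes condition~2 applied to $\mu_i \in I \setminus T$. Combined with condition~3 this yields that every distribution before the first visit to $T$ lies in $H$. To see that $T$ is actually reached, suppose for contradiction $\mu_i \notin T$ for all $i$. Then $\mu_i \in I$ for every $i$, so iterating condition~5 gives $R(\mu_0) \geq R(\mu_i) + i$ while condition~4 gives $R(\mu_i) \geq 0$, forcing $R(\mu_0) \geq i$ for every $i \in \Naturals$ and contradicting finiteness of $R(\mu_0)$.

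For completeness in the unit case, I would set $k = \min\{i \geq 0 : \mu_i \in T\}$, $I = \{\mu_0, \ldots, \mu_k\}$, and $R(\mu_i) = k - i$ for $\mu_i \in I$ (extending $R$ to $0$ outside $I$). The only non-routine step, which I expect to be the main obstacle, is well-definedness of $R$: if $\mu_i = \mu_j$ with $i < j \leq k$, then distributional memorylessness forces $\mu_{i+l} = \mu_{j+l}$ for every $l \geq 0$, so writing $k - i = q(j - i) + r$ with $0 \leq r < j - i$ gives $\mu_k = \mu_{i+r}$ with $i + r < k$, contradicting minimality of $k$. Hence $\mu_0, \ldots, \mu_k$ are pairwise distinct and $R$ is well-defined on $I$. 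The five conditions of Definition~\ref{def:certificate} then follow immediately from $\MDP^\strategy(\mu_i) = \mu_{i+1}$ and the safety of the stream guaranteed by the reach-avoidance hypothesis.

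Part~2 is immediate from Part~1 applied to a witness $\mu_0 \in \Init$. For Part~3, soundness runs identically, with the strengthened condition $\Init \subseteq I$ allowing the induction to start from any $\mu_0 \in \Init$. For completeness I would set $I = \bigcup_{\mu_0 \in \Init}\{\MDP^\strategy(\mu_0, i) : 0 \leq i \leq k_{\mu_0}\}$, where $k_{\mu_0}$ is the first time $\mu_0$'s stream visits $T$, and define $R(\mu) = \min\{j \geq 0 : \MDP^\strategy(\mu, j) \in T\}$ on $I$. By distributional memorylessness the future is a function of the current distribution, so this minimum is well-defined and depends only on $\mu$, not on which initial distribution's prefix happens to contain $\mu$; the strict-decrease condition $R(\mu) = R(\MDP^\strategy(\mu)) + 1$ for $\mu \in I \setminus T$ then holds by construction, and the remaining conditions are verified exactly as in the unit case.
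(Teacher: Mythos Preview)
Your proposal is correct and follows essentially the same approach as the paper's proof: both directions proceed exactly as you describe, with the paper also setting $I$ to the finite trajectory prefix and $R$ to the remaining-steps function (resp.\ the distance-to-$T$ function in the universal case), and invoking distributional memorylessness to argue distinctness of $\mu_0,\ldots,\mu_k$. Your periodicity argument for well-definedness of $R$ is in fact more explicit than the paper's one-line remark that a repeated distribution would form a cycle contradicting minimality of the hitting time.
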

From Proposition~\ref{prop:hardness}, it follows that giving a complete procedure for synthesizing (or indeed even checking existence of) distributional certificates is Skolem-hard. Hence in what follows, we provide an automated template-based overapproximation approach that exploits the advances in SMT-solvers to give as an implementable procedure. First, we restrict to memoryless policies in Section~\ref{sec:algomemoryless}, then we address the general case of distributionally memoryless policies in Section~\ref{sec:algodistributional}.

\section{Algorithm for Memoryless Policies}\label{sec:algomemoryless}

We now consider the (unit/existential/universal) Policy Synthesis and Verification problems defined in Section~\ref{sec:problem} under memoryless policies and present our algorithms for solving these problems. Due to space restrictions, in what follows we directly present our algorithm for solving the existential Policy Synthesis problem under memoryless policies. We then explain how this algorithm can be straightforwardly extended to solve the other problems under the memoryless restriction.

Our algorithm simultaneously synthesizes an initial distribution (for the existential problem), a memoryless policy and an {\em affine} distributional reach-avoid certificate. Restricting to affine distributional reach-avoid certificates (formalized below) ensures efficient and automated computation. While we cannot provide completeness guarantees due to this restriction, we show that the algorithm is sound and {\em relatively complete}, i.e.\ it is guaranteed to compute a memoryless policy and an affine distributional reach-avoid certificate when they exist. 

A distributional reach-avoid certificate $\certificate = (R,I)$ is {\em affine} if it can be specified via affine expressions and inequalities over the distribution space $\Distributions(S)$. That is, the distributional ranking function $R$ is of the form $R  = r_0 + \sum_{i=1}^n r_i \cdot \mu(s_i)$
and the distributional invariant $I$ is of the form
\[ I = \Big\{\distribution \in \Distributions(\States) \mid {\bigwedge}_{j=1}^{N_I} (b_0^j + {\sum}_{i=1}^n b_i^j \cdot \distribution(s_i)) \geq 0 \Big\} \]
where $S=\{s_1,\dots,s_n\}$ are MDP states, $b^i_j$ are real-valued constants and $N_I$ is the number affine inequalities that define $I$. While the values of variables $r_i$ and $b^j_i$ will be computed by our algorithm, $N_I$ is an algorithm parameter which we refer to as the {\em template size}. Note that we require all affine inequalities that specify $I$ to be non-strict. This is the technical requirement for our method to provide relative completeness.

\smallskip\noindent{\bf Input.} The algorithm takes as input an MDP $\MDP = (\States, \Actions, \transitions)$ together with affine sets of initial distributions $\Init$, target distributions $T$ and safe distributions $H$. It also takes as input the template size parameter $N_I$.

\smallskip\noindent{\bf Algorithm outline.} The algorithm employs a template-based synthesis approach and proceeds in three steps. First, it fixes symbolic templates for an initial distribution $\distribution_0 \in \Init$, a memoryless policy $\strategy$, and an affine distributional reach-avoid certificate $\certificate = (R, I)$. Symbolic variables that define the templates are at this stage of {\em unknown} value. Second, the algorithm collects a system of constraints over the symbolic template variables that encode that $\distribution_0 \in \Init$, $\strategy$ is a memoryless policy, and $\certificate$ is a correct distributional reach-avoid certificate. Third, it solves the resulting system of constraints, to get concrete instances of $\distribution_0$, $\strategy$, and $\certificate$. We now detail these steps.

\smallskip\noindent{\bf Step 1 -- Fixing templates.} The algorithm fixes templates for $\distribution_0$, $\strategy$ and $\certificate = (R, I)$:
\begin{compactitem}
	\item {\em Template for $\distribution_0$.} For each MDP state $s_i$, $1\leq i\leq n$, the algorithm introduces a symbolic template variable $m_i$ to encode the probability of initially being in $s_i$.
	\item {\em Template for $\strategy$.} Since the algorithm searches for a memoryless policy, for each state action pair $\state_i \in \States$ and $\action_j \in \Actions$ we fix a symbolic template variable $p_{\state_i,\action_j}$ to encode the probability of taking action $\action_j$ in state $s_i$. If $\action_j \not\in \Actions(s_i)$, we set $p_{\state_i,\action_j}=0$.
	\item {\em Template for $R$.} The template for $R$ is defined by introducing $n+1$ real-valued symbolic template variables $r_0, \dots, r_n$ and letting $R  = r_0 + \sum_{i=1}^n r_i \cdot \mu(s_i)$.
	\item {\em Template for $I$.} The template for $R$ is defined by introducing real-valued symbolic template variables $b^j_i$ for each $1 \leq j \leq N_I$ and $0 \leq i \leq n$, with
	\[ I = \Big\{\distribution \in \Distributions(\States) \mid {\bigwedge}_{j=1}^{N_I} (b_0^j + {\sum}_{i=1}^n b_i^j \cdot \distribution(s_i)) \geq 0 \Big\}. \]
\end{compactitem}

\noindent{\bf Step 2 -- Constraint collection.} The algorithm now collects constraints over the template variables that together encode that $\distribution_0 \in \Init$, $\strategy$ is a memoryless policy and $\certificate$ is a distributional reach avoid certificate. The constraint $\Phi_{\textrm{init}}$ encodes that $\distribution_0 \in \Init$, $\Phi_{\strategy}$ encodes that $\strategy$ is a memoryless policy, whereas $\Phi_1,\dots,\Phi_5$ encode the 5 defining conditions in Definition~\ref{def:certificate}:
\begin{compactitem}
	\item $\Phi_{\textrm{init}} \equiv (\mu_0 \in \Init) \equiv {\bigwedge}_{j=1}^{N_{\Init}} (a_0^j + \sum_{i=1}^n a_i^j \cdot m_i) \bowtie 0$, where recall $a_i^j$'s are real constants defining $\Init$.
	\item $\Phi_{\strategy} \equiv \bigwedge_{i=1}^n \Big( \sum_{j=1}^{|\Actions|} p_{\state_i,\action_j} = 1 \land \bigwedge_{j=1}^{|\Actions|} (p_{\state_i,\action_j} \geq 0) \Big)$
	\item $\Phi_1 \equiv \forall \mathbf{x} \in \mathbb{R}^{n}.\, \Init(\mathbf{x}) \Rightarrow I(\mathbf{x})$.
	\item $\Phi_2 \equiv \forall \mathbf{x} \in \mathbb{R}^{n}.\, I(\mathbf{x}) \Rightarrow I(\mathrm{step}(\mathbf{x}))$, where $\mathrm{step}(\mathbf{x})(x_i) = \sum_{\state_k \in \States, \action_j \in \Actions(s_k)} p_{\state_k, \action_j} \cdot \transitions(\state_k, \action_j, \state_i) \cdot x_j$ yields the distribution after applying one step of policy $\strategy$.
	\item $\Phi_3 \equiv \forall \mathbf{x} \in \mathbb{R}^{n}.\, I(\mathbf{x}) \Rightarrow H(\mathbf{x})$.
	\item $\Phi_4 \equiv \forall \mathbf{x} \in \mathbb{R}^{n}.\, I(\mathbf{x}) \Rightarrow R(\mathbf{x}) \geq 0$.
	\item $\Phi_5 \equiv \forall \mathbf{x} \in \mathbb{R}^{n}.\, I(\mathbf{x}) \backslash T(\mathbf{x}) \Rightarrow R(\mathbf{x}) \geq R(\mathrm{step}(\mathbf{x})) - 1$, where $\mathrm{step}(\mathbf{x})$ is defined as above.
\end{compactitem}

\smallskip\noindent{\bf Step 3 -- Constraint solving.} The initial constraint $\Phi_{\textrm{init}}$ and the policy constraint $\Phi_\pi$ are purely existentially quantified over template variables. However, $\Phi_1,\dots,\Phi_5$ all contain alternation of an existential quantifier over the symbolic template variables followed by a universal quantifier over the distribution $\mathbf{x}\in\mathbb{R}^n$ over the MDP state probabilities. Quantifier alternation over real-valued arithmetic is in general hard to handle directly and can lead to inefficiency in solvers. To that end, our algorithm first translates these constraints into equisatisfiable \emph{purely existentially quantified} constraints, before feeding the resulting constraints to an off-the-shelf solver.

We begin by noting that $\Phi_1,\Phi_2,\Phi_4,\Phi_5$ can all be expressed as conjunctions of finitely many constraints of the form
\begin{eqnarray}\label{eq:horn}
	\begin{split}
	\forall\mathbf{x}\in\mathbb{R}^n.\ &(\textrm{affexp}_1(\mathbf{x}) \bowtie 0) \land \dots \land (\textrm{affexp}_N(\mathbf{x}) \bowtie 0) \\
	&\Longrightarrow (\textrm{affexp}(\mathbf{x}) \geq 0),
	\end{split}
\end{eqnarray}
with each $\textrm{affexp}_i(\mathbf{x})$ and $\textrm{affexp}(\mathbf{x})$ being an affine expression over $\mathbf{x}$ whose affine coefficients are either concrete real values or symbolic template variables, and each $\bowtie \in \{\geq, >\}$. The inequalities on the left-hand-side of the implication may be both strict and non-strict, however the inequalities on the right-hand-side of each of $\Phi_1,\Phi_2,\Phi_4,\Phi_5$ are always non-strict, since we assumed that the template for $I$ is specified via non-strict affine inequalities.
Now, to remove quantifier alternation, we apply the translation of~\cite[Corollary~1]{AsadiC0GM21} which is an extension of Farkas' lemma~\cite{farkas1902theorie} that allows strict inequalities on the left-hand-side of the implication (we provide this translation in Appendix~\ref{app:quantifierelim}). This allows us to translate each constraint of the form as in Eq.~\eqref{eq:horn} into an equisatisfiable purely existentially quantified system of quadratic constraints with real-valued variables, where variables are either symbolic template variables or fresh symbolic variables introduced in translation.

On the other hand, $\Phi_3 \equiv \forall \mathbf{x} \in \mathbb{R}^{n}.\, I(\mathbf{x}) \Rightarrow H(\mathbf{x})$ is a conjunction of constraints of the form
\begin{eqnarray}\label{eq:horn2}
	\begin{split}
		\forall\mathbf{x}\in\mathbb{R}^n.\ &(\textrm{affexp}_1(\mathbf{x}) \geq 0) \land \dots \land (\textrm{affexp}_N(\mathbf{x}) \geq 0) \\
		&\Longrightarrow (\textrm{affexp}(\mathbf{x}) \bowtie 0),
	\end{split}
\end{eqnarray}
since $I$ is specified in terms of non-strict inequalities but $H$ can be specified in terms of both strict and non-strict inequalities. However, since the set of distributions contained in $I$ is topologically closed and bounded as $I \subseteq \Distributions(\States)$, Eq.~\eqref{eq:horn2} is equivalent to the constraint obtained by replacing $\textrm{affexp}(\mathbf{x}) \bowtie 0$ above  by $\textrm{affexp}(\mathbf{x}) \geq \epsilon$, with $\epsilon > 0$ being a newly introduced symbolic variable. The latter constraint is of the form as in Eq.~\eqref{eq:horn}, hence we may again apply the above translation.

Finally, the algorithm feeds the resulting system to an off-the-shelf SMT solver, and any solution gives a concrete instance of initial distribution $\distribution_0\in\Init$, memoryless policy $\strategy$ and affine distributional reach-avoid certificate $\certificate$.

The following theorem (proved in Appendix~\ref{app:thmalgo}) establishes soundness, relative completeness and an upper bound on the computational complexity of our algorithm. Soundness and relative completeness follow from the fact that the quantifier removal procedure yields an equisatisfiable system of constraints. The PSPACE upper bound follows since the quantifier removal procedure reduces the problem to solving a sentence in the existential first-order theory of the reals. 

\begin{theorem}\label{thm:firstalgo}
	\emph{Soundness}: If the algorithm returns initial distribution $\distribution_0\in\Init$, memoryless policy $\strategy$, and affine inductive distributional reach-avoid certificate $\certificate$, then the MDP $\MDP$ satisfies existential $(T,H)$-reach-avoidance under $\strategy$.
	
	\emph{Relative completeness}: If there exists $\distribution_0 \in \Init$, a memoryless policy $\strategy$, and an affine distributional $(T,H)$-reach-avoid certificate $\certificate$ from $\distribution_0$ under $\strategy$, then there exists a template size $N_I \in \mathbb{N}$ such that $\distribution_0$, $\strategy$, and $\certificate$ are computed by the algorithm.
	
	\emph{Complexity}: The runtime of the algorithm is in PSPACE in the size of the encoding of the MDP, $\Init$, $T$, $H$, and the template size parameter $N_I \in \mathbb{N}$. 
\end{theorem}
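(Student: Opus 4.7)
The plan is to argue that, after the Farkas-style quantifier elimination of Step~3, the existentially quantified constraint system fed to the SMT solver is \emph{equisatisfiable} with the existence of an initial distribution $\distribution_0 \in \Init$, a memoryless policy $\strategy$, and an affine distributional $(T,H)$-reach-avoid certificate $\certificate = (R,I)$ for $\MDP$ from $\distribution_0$ under $\strategy$. Once this equisatisfiability is in hand, soundness follows from Theorem~\ref{thm:soundandcomplete}(2), relative completeness from instantiating the template with the coefficients of an assumed certificate, and complexity from Canny's PSPACE bound for the existential first-order theory of the reals.

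For \emph{soundness}, I would verify by inspection that each constraint collected in Step~2 encodes exactly its intended semantic condition: $\Phi_{\textrm{init}}$ expresses $\distribution_0 \in \Init$; $\Phi_{\strategy}$ expresses that each $p_{\state_i,\action_j}$ defines a probability distribution over $\Actions(\state_i)$ (with zero mass on unavailable actions); and $\Phi_1,\ldots,\Phi_5$ mirror, one for one, the five conditions of Definition~\ref{def:certificate}. Since the extension of Farkas' lemma in~\cite[Corollary~1]{AsadiC0GM21} is equisatisfiable, any solution of the reduced existential system produces a valid triple $(\distribution_0,\strategy,\certificate)$, whereupon Theorem~\ref{thm:soundandcomplete}(2) yields existential $(T,H)$-reach-avoidance under $\strategy$.

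For \emph{relative completeness}, I assume there exist $\distribution_0 \in \Init$, a memoryless $\strategy$, and an affine certificate $\certificate = (R,I)$, where $I$ is defined by $N_I^\star$ non-strict affine inequalities. Choosing the template size $N_I := N_I^\star$, the coefficients of $R$, the defining inequalities of $I$, the action probabilities of $\strategy$, and the components $\distribution_0(\state_i)$ directly instantiate the symbolic template variables so that all universally quantified constraints $\Phi_{\textrm{init}}, \Phi_{\strategy}, \Phi_1, \ldots, \Phi_5$ hold by construction. Equisatisfiability of the Farkas reduction then guarantees that the existential system has a solution, which the solver returns. For \emph{complexity}, the reduced system is a quadratic formula of size polynomial in the encodings of $\MDP, \Init, T, H$ and in $N_I$, so its satisfiability reduces to a decision problem in the existential first-order theory of the reals and is decidable in PSPACE.

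The main obstacle is the treatment of strict inequalities on the right-hand side of $\Phi_3 \equiv \forall \mathbf{x}.\, I(\mathbf{x}) \Rightarrow H(\mathbf{x})$, which the Farkas-style translation cannot handle directly. I would resolve this via the compactness argument already sketched in Step~3: since the template for $I$ uses only non-strict inequalities and $I \subseteq \Distributions(\States)$ is topologically closed and bounded, a universally valid strict inequality $\textrm{affexp}(\mathbf{x}) > 0$ over $I$ is equivalent to $\textrm{affexp}(\mathbf{x}) \geq \epsilon$ for some fresh existentially quantified $\epsilon > 0$. Verifying that this rewriting is lossless in both directions -- used for soundness one way and for relative completeness the other -- is the most delicate step, and it is exactly what motivates the paper's insistence that the template for $I$ use non-strict inequalities only.
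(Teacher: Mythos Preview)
Your proposal is correct and follows essentially the same approach as the paper's proof: both argue soundness and relative completeness via the equisatisfiability of the Farkas-style translation (Lemma~\ref{lemma:strengthenedfarkas}) between the Step~2 constraints and the purely existential Step~3 system, and both obtain the PSPACE bound from the polynomial size of the resulting existential-theory-of-reals query. If anything, you are slightly more explicit than the paper on two points: you spell out that Theorem~\ref{thm:soundandcomplete} is what bridges ``$\certificate$ is a valid certificate'' to ``$\MDP$ satisfies existential $(T,H)$-reach-avoidance,'' and you flag the compactness argument for the strict right-hand-side inequalities in $\Phi_3$ as the place where care is needed for equisatisfiability in both directions.
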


\noindent{\bf Extension to unit/universal policy synthesis.} We now show how the above algorithm can be extended to solve unit and universal Policy Synthesis problems as well. For unit Policy Synthesis, the initial distribution $\mu_0$ is given, hence in Step~1 of the algorithm we do not need to fix a template for it and in Step~2 we remove the constraint~$\Phi_{\textrm{init}}$. For universal Policy Synthesis, we now need distributional reach-avoidance to hold from all initial distributions in $\Init$. Hence, in Step~1 we again do not need to fix a template for $\mu_0$ whereas in Step~2 we set $\Phi_1 \equiv \forall \mathbf{x} \in \mathbb{R}^{n}.\ (\Init(\mathbf{x}) \Rightarrow I(\mathbf{x}))$. In both cases, the rest of the algorithm proceeds analogously as above. Moreover, as in Theorem~\ref{thm:firstalgo}, both algorithms are sound, relatively complete and of PSPACE runtime in the size of the problem.

\smallskip\noindent{\bf Extension to policy verification.} The above algorithm is also easily extended to solve the unit/existential/universal Policy Verification problems. In particular, in these problems the memoryless policy $\pi$ is given and need not be computed. Thus, in Step~1 above we need not fix the templates $p_{s_i,a_j}$ for the policy as these are given concrete values, and in Step~2 we remove the constraint $\Phi_\pi$. The rest of the algorithms proceed analogously as in the Policy Synthesis case, and as a corollary of Theorem~\ref{thm:firstalgo} we can show that they are sound, relatively complete and of PSPACE runtime in the size of the problem.

\section{Extension to General Policies}\label{sec:algodistributional}

We also extend the algorithms for memoryless policies in Section~\ref{sec:algomemoryless} to sound but incomplete procedures for solving unit/existential/universal Policy Synthesis and Verification problems under distributionally memoryless policies. While memoryless policies are preferred in practice since they can be efficiently deployed and executed, it was shown in~\cite{AkshayCMZ23} that there exist MDPs for which memoryless or even bounded memory policies are insufficient for ensuring distributional safety, and the same example shows that bounded memory strategies are insufficient for distributional reach-avoidance. However, by Theorem~\ref{stm:dist_strat_is_enough}, it always suffices to restrict to distributionally memoryless policies.

The key challenge in extending our algorithms in Section~\ref{sec:algomemoryless} is the design of an appropriate policy template, since now it becomes insufficient to introduce one template variable $p_{\state_i,\action_j}$ for each state-action pair as these probabilities may depend on history. However, it turns out that distributionally memoryless policies do admit a sufficiently simple template specified in terms of quotients of symbolic affine expressions over $\mathbf{x}$.
In Appendix~\ref{app:generalpolicies}, we show how to extend our algorithm for existential Policy Synthesis to compute distributionally memoryless policies. 
We also provide extensions to unit/universal Policy Synthesis and to Policy Verification. Finally, we prove that the algorithm is sound and runs in PSPACE. However, the algorithm does not provide relative completeness guarantees, since our templates for distributionally memoryless policies are not general and only allow affine expressions over distribution probabilities.

\section{Experimental Evaluation}\label{sec:experiments}

We implemented a prototype of our method in Python~3, using \texttt{SymPy} \cite{DBLP:journals/peerj-cs/MeurerSPCKRKIMS17} for symbolic expressions and \texttt{PySMT} \cite{gario2015pysmt} to manage SMT solvers. We employ \texttt{Yices} 2.6 \cite{Dutertre14} as solving back-end.
We also evaluated \texttt{z3} \cite{DBLP:conf/tacas/MouraB08} and \texttt{mathsat} \cite{mathsat5}, but  \texttt{Yices} seemed to consistently perform best.
Our experiments were executed on consumer hardware (AMD Ryzen 3600 CPU with 16 GB RAM). Our implementation is publicly available at \url{https://zenodo.org/records/11082466}.

As mentioned in the Introduction, we are not aware of any existing automated methods for solving this task in the distributional reach-avoidance setting, thus we do not have a reasonable baseline to compare against.
The evaluation of our prototype is aimed at showing that distributional certificates can be found on reasonably sized systems, without heuristics and optimization.

\smallskip\noindent{\bf Benchmarks.}
We evaluate our method on several distributional reach-avoid tasks.
Most are modelling a robot swarm in different gridworld environments.
Each model requires that at some point in time at least $90\%$ of the robots are in the target set of states, while no more than $10\%$ of robots may be in the unsafe set of states at any intermediate step.
Here, we present the following five models:
\textbf{Running}, i.e.\ the environment in Example~\ref{eg:grid-running}, \textbf{Double}, a 3x5 grid where the robots start in two different locations and need to reach two goal states, and three grids of size 5x4, 8x8, and 20x10, comprising various transition dynamics as well as limited and forbidden regions.
For all grid world models, we consider both (unit) Policy Verification task in which a policy is fixed and needs to be verified, and (unit) Policy Synthesis task where a policy together with a certificate needs to be computed.
For all examples, we find the template size $N_I = 1$ to be sufficient.
We also present results on \textbf{Insulin}, a pharmacokinetics system \cite[Ex.~2]{DBLP:journals/jacm/AgrawalAGT15}, based on \cite{DBLP:conf/qest/ChadhaKVAK11}, and \textbf{PageRank} \cite[Fig.~3]{DBLP:journals/jacm/AgrawalAGT15}.
As these two are Markov chains, hence the verification and synthesis tasks coincide.
A more detailed description of all models can be found in Appendix~\ref{app:models}.



\smallskip\noindent{\bf Results.}  Our results are shown in Table~\ref{tab:results}. 
Our prototype is able to solve 7/7 Policy Verification tasks with ease, and 3/5 Policy Synthesis tasks.
A notable feature of this performance is that it is applicable to robot swarms with arbitrarily many agents, where the model size in the state-based view of MDPs would be intractable for classical model checking tools. This shows that our method is capable of solving highly non-trivial distributional reach-avoidance tasks. Furthermore, our results show that memoryless policies are sufficient in many scenarios.

We believe that the reason behind better scalability of our tool on Policy Verification compared to Policy Synthesis is that the final SMT query in Policy Verification tasks is structurally simpler.
In particular, observe that the query for verification of \textbf{Grid} 20x10 is larger than the synthesis query for \textbf{Grid} 8x8, yet it is solved much faster.
To provide further insight, we provide an example SMT query generated for the \textbf{Grid} 5x4 example in Appendix~\ref{app:smt}. Thus, any improvement in SMT solvers to handle larger constraints will improve the scalability of our approach.


\smallskip\noindent{\bf Practical observations.} Somewhat surprisingly, we often observe that the computationally more expensive part of our implementation is construction of constraints (Steps~1 and~2 in Section~\ref{sec:algomemoryless}), especially for the larger certification examples.
We believe that this is due to our naive usage of \texttt{SymPy} to extract constraints, since in theory this procedure should run in polynomial time without any complicated data structures.
In our prototype implementation, we did not aim for efficient extraction and manipulation of affine expressions.
Improvements on this end, e.g.\ by manually implementing a tailored polynomial representation, would further decrease the runtime of our tool.

We also observe that performance of SMT solvers is highly volatile, with their runtimes sometimes increasing 10- or 100-fold on the same instance, presumably due to running into a bad randomized initialization.
Implementing a tighter integration with such solvers and, in particular, providing them with heuristical guidance could further improve performance.



\begin{table}
	\caption{
		Summary of our experiments.
		For each model we list, from left to right, the number of (reachable) states, the number of actions, the time used for invariant generation, the time spent by the SMT solver, and the total number of variables, constraints, and operations in the query sent to the solver.
		The first line for each model is the policy verification query, the second line is the policy synthesis query, where applicable.
		T/O denotes a timeout after 10 minutes.
	}
	\centering\small\setlength{\tabcolsep}{5pt}
	\begin{tabular}{rccccccc}
		              Model & $|\States|$ & Act. & Inv. &  SMT  & Var. & Con. & Ops  \\
		\midrule
		   \textbf{Running} &      7      &  19  &  1s  & $<$1s &  64  &  81  & 556  \\
		                    &             &      &  2s  & $<$1s &  82  & 105  & 776  \\
		    \textbf{Double} &     11      &  30  &  3s  & $<$1s &  88  & 113  & 795  \\
		                    &             &      &  7s  & $<$1s & 115  & 148  & 1114 \\
		  \textbf{Grid} 5x4 &     15      &  29  &  2s  & $<$1s & 112  & 145  & 1018 \\
		                    &             &      &  6s  &  1s   & 133  & 173  & 1395 \\
		  \textbf{Grid} 8x8 &     32      &  99  &  8s  &  1s   & 216  & 284  & 2086 \\
		                    &             &      & 19s  &  T/O  & 312  & 410  & 3144 \\
		\textbf{Grid} 20x10 &     88      & 280  & 68s  &  2s   & 642  & 910  & 6556 \\
		                    &             &      & 238s &  T/O  & 921  & 1276 & 9507 \\
		\midrule
		   \textbf{Insulin} &      5      &  -   &  2s  &  3s   &  74  &  88  & 790  \\
		  \textbf{PageRank} &      5      &  -   &  2s  & $<$1s &  52  &  65  & 571
	\end{tabular}
	\label{tab:results}
\end{table}

\section{Conclusion}
We considered the distributional reach-avoidance problem in MDPs, for which we introduced distributional reach-avoid certificates and proposed fully automated template-based synthesis algorithms for solving policy verification and synthesis problems under distributional reach-avoidance. 
Our work opens several avenues for future work. It would be interesting to consider practical heuristics for template-based synthesis. One could also consider more general distributional properties, ultimately paving the way towards distributional LTL. Finally, our template-based synthesis method assumes that the structure of the template, i.e.~the number of conjunctive clauses in invariants, is provided a priori. This is a known limitation of many template-based synthesis methods, and exploring effective heuristics for template search is an interesting direction.

\section*{Acknowledgements}

This work was supported in part by the ERC-2020-CoG 863818 (FoRM-SMArt), the Singapore Ministry of Education (MOE) Academic Research Fund (AcRF) Tier 1 grant, Google Research Award 2023 and the SBI Foundation Hub for Data and Analytics.

\bibliographystyle{named}
\bibliography{bibliography}

\newpage
\appendix
\begin{center}
	\Large
	Appendix
\end{center}

\section{Proof of Theorem~1}\label{app:distmem}

\begin{theorem*}
	Let $T,H \subseteq \Distributions(\States)$ be target and safe sets. MDP $\MDP$ satisfies unit/existential/universal-$(T,H)$-reach-avoidance under some policy if and only if there exists a distributionally memoryless policy $\strategy$ such that $\MDP$ satisfies satisfies unit/existential/universal-$(T,H)$-reach-avoidance under $\strategy$.
\end{theorem*}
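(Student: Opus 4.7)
The backward direction is trivial: every distributionally memoryless policy is in particular a policy. For the forward direction, I will take a general policy $\strategy$ witnessing the reach-avoidance property and construct a distributionally memoryless $\strategy'$ witnessing the same. The key tool is that, for any policy $\strategy$ and initial distribution $\mu_0$, at every step $k$ the policy induces an \emph{effective memoryless action selector} $\alpha_k^{\mu_0}:S \to \Distributions(\Actions)$, defined whenever $\MDP^\strategy(\mu_0,k)(s) > 0$ by
\[
	\alpha_k^{\mu_0}(s, a) \;:=\; \frac{\sum_{\finitepath:\,|\finitepath|=k,\,\finitepath_k=s} \ProbabilityMC<\MDP^\strategy, \mu_0>[\finitepath] \cdot \strategy(\finitepath)(a)}{\MDP^\strategy(\mu_0,k)(s)},
\]
and arbitrary otherwise. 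A direct calculation shows that applying $\alpha_k^{\mu_0}$ as a memoryless selector to $\MDP^\strategy(\mu_0,k)$ yields exactly $\MDP^\strategy(\mu_0,k+1)$, so these selectors reproduce the stream induced by $\strategy$ from $\mu_0$.

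For the unit case from $\mu_0$, write $\mu_k := \MDP^\strategy(\mu_0, k)$ and let $n := \min\{k : \mu_k \in T\}$. For each distribution $\mu$ appearing in $\{\mu_0, \dots, \mu_{n-1}\}$, define $\strategy'(\mu) := \alpha_{k(\mu)}^{\mu_0}$ where $k(\mu) := \max\{k < n : \mu_k = \mu\}$ is the \emph{latest} occurrence of $\mu$, and elsewhere $\strategy'$ is arbitrary. Under $\strategy'$ from $\mu_0$, an inductive argument shows that the current distribution is always some $\mu_j$ with $j \leq n$ and that its ``original-stream index'' strictly increases at each step: from $\mu_j$, $\strategy'$ applies $\alpha_{k(\mu_j)}^{\mu_0}$, and the next distribution is $\mu_{k(\mu_j)+1}$ with $k(\mu_j)+1 > j$. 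Since the index is bounded by $n$, $\strategy'$ reaches $\mu_n \in T$ in at most $n$ steps while passing only through $\mu_k$'s with $k < n$, which lie in $H$. The existential case then follows immediately by applying this construction to any witness $\mu_0 \in \Init$.

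The main obstacle is the universal case, because running the unit construction separately for different $\mu_0 \in \Init$ may prescribe conflicting actions at a distribution shared by several streams. To remedy this I pool all trajectories: set $U := \bigcup_{\mu_0 \in \Init}\{\MDP^\strategy(\mu_0, k) : 0 \leq k < n(\mu_0)\}$, so that $U \subseteq H$ and $U \cap T = \emptyset$. For each $\mu \in U$ let
\[
	d(\mu) \;:=\; \min\bigl\{\, n(\mu_0) - k \;:\; \mu_0 \in \Init,\ \MDP^\strategy(\mu_0, k) = \mu,\ k < n(\mu_0)\,\bigr\},
\]
pick (via the axiom of choice) a minimizing pair $(\mu_0^*, k^*)$, and set $\strategy'(\mu) := \alpha_{k^*}^{\mu_0^*}$. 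The $\strategy'$-successor of any $\mu \in U$ equals $\MDP^\strategy(\mu_0^*, k^*+1)$, which lies in $T$ when $d(\mu) = 1$ and otherwise in $U$ with $d$-value at most $d(\mu)-1$. Hence for any $\mu_0 \in \Init$: if $\mu_0 \in T$ we are done at step $0$, and otherwise $d(\mu_0)$ is a well-founded strictly decreasing measure along the $\strategy'$-stream, so $T$ is reached within $d(\mu_0) \leq n(\mu_0)$ steps while all intermediate distributions remain in $U \subseteq H$. In every case, $\strategy'$ is distributionally memoryless by construction.
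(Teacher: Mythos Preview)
Your proof is correct and follows essentially the same strategy as the paper: extract from the general policy the marginal action selectors $\alpha_k^{\mu_0}$ that reproduce each one-step transition of the distribution stream, and then assemble these into a distributionally memoryless policy using a progress/distance argument.

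The one notable technical difference is in the unit case. The paper first passes to a witness policy that reaches $T$ in the \emph{minimum} possible number of steps and argues that the resulting stream $\mu_0,\dots,\mu_N$ consists of pairwise distinct distributions (otherwise one could excise a cycle and contradict minimality); distinctness then makes the memoryless policy trivially well-defined. You instead keep the original stream, allow repeats, and define $\strategy'(\mu)$ via the \emph{latest} pre-target occurrence $k(\mu)$; your strictly-increasing-index argument then automatically skips over any cycles. This avoids the auxiliary ``take a minimal witness'' step and is slightly more direct. For the universal case your distance function $d$ and the paper's $\textsc{distance}/\textsc{Next}$ construction are the same argument in different notation; you also dispense with the paper's preliminary refinement making each stream injective, since your $d$-decrease argument does not need it.
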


\begin{proof}
	We first prove the theorem for unit and existential $(T,H)$-reach-avoidance. We then modify the proof to extend it to the universal-$(T,H)$-reach-avoidance case as well.
	\begin{enumerate}
		\item {\em Unit/existential-$(T,H)$-reach-avoidance} Suppose that $\MDP$ satisfies unit/existential-$(T,H)$-reach-avoidance from $\mu_0$ under some policy. In order to prove the theorem, we need to show that there exists a distributionally memoryless policy $\strategy$ such that $\MDP$ satisfies unit/existential-$(T,H)$-reach-avoidance from $\mu_0$ under $\strategy$. We obtain a distributionally memoryless policy $\pi$ as follows.
		
		Let $\pi'$ be a policy such that $\MDP$ satisfies unit/existential-$(T,H)$-reach-avoidance from $\mu_0$ under $\pi'$ and such that the target set is reached under $\pi'$ in the smallest possible number of steps among all other policies under which unit/existential-$(T,H)$-reach-avoidance from $\mu_0$ is satisfied.
		Let $\distribution_0, \distribution_1, \distribution_2, \dots$ be the stream of distributions induced by $\pi'$ from $\distribution_0$. Since $\MDP$ satisfies $(T,H)$-reach-avoidance from $\distribution_0$ under $\strategy'$, there exists index $N \in \mathbb{N}$ such that $\distribution_N \in T$ and $\distribution_0,\dots,\distribution_{N-1} \in H \backslash T$.
		
		First, we claim that $\distribution_0,\dots,\distribution_{N}$ are all distinct distributions. Suppose that, on the contrary, there exist $0 \leq i < j \leq N$ such that $\mu_i = \mu_j$. To obtain contradiction, we use $\pi'$ to construct another policy $\pi''$ such that $\MDP$ satisfies unit/existential-$(T,H)$-reach-avoidance from $\mu_0$ under $\pi''$ and such that the target set is reached under $\pi''$ in a strictly smaller number of steps compared to $\pi'$. Define $\pi'': \FPaths<\MDP> \to \Distributions(\Actions)$ as follows. For a finite history $\rho$ of length $t < i$ or $t \geq N - j + i$, we let $\pi''(\rho) = \pi'(\rho)$. On the other hand, for a finite history $\rho$ of length $i \leq t <  N - j + i$, we do the following. For each $t \in \mathbb{N}$ and for each state $\state$ and action $\action$, let $p^t_{s, a} = \ProbabilityMC<\MDP^{\strategy'} \distribution_0>[\mathsf{act}_t(\infinitepath) = a]$ be the probability of playing action $a$ at step $t$ under $\pi'$, where $\mathsf{act}_t : \IPaths<\MDP> \to \Actions$ yields the $t$-th action of an infinite path. Then, we define
		\[ \pi''(\finitepath)(a) = \frac{p^{t+j-i}_{s, a}}{\sum_{\action' \in \Actions(s_{t+j-i})}p^{t+j-i}_{s, a'}}. \]
		In words, $\pi''$ picks each action with the probabilities that transform $\mu_{t+j-i}$ into $\mu_{t+j-i+1}$. This ensures that the stream of distributions induced under by $\pi''$ is $\distribution_0,\dots,\distribution_i,\distribution_{j+1},\dots,\distribution_N$. Therefore, $\MDP$ satisfies	the unit/existential-$(T,H)$-reach-avoidance from $\mu_0$ under $\pi''$ and the induced stream reaches $T$ in a strictly smaller number of steps compared to $\pi'$. This yields a contradiction, hence we have proved that  $\distribution_0,\dots,\distribution_{N}$ are all distinct.
		
		Finally, we use $\pi'$ to construct a distributionally memoryless policy $\pi: \FPaths<\MDP> \to \Distributions(\Actions)$ under which $\MDP$ satisfies unit/existential-$(T,H)$-reach-avoidance from $\mu_0$. On histories of length $t \geq N$, $\pi$ can be an arbitrary distributionally memoryless policy. On the other hand, we proceed by induction on $0 \leq t < N$ to define $\pi$ on finite paths of length $t$. For each finite path $\finitepath = s_0 a_0 s_1 a_1 \dots s_t$, if $\distribution_t(s_t) = 0$ we define $\pi(\finitepath)$ arbitrarily. Else, for each action $\action \in \Actions(s_t)$, we define
		\[ \pi(\finitepath)(a) = \frac{p^t_{s, a}}{\sum_{\action' \in \Actions(s_t)}p^t_{s, a'}}. \]
		In words, $\pi$ is a distributionally memoryless policy that transforms $\distribution_t$ into $\distribution_{t+1}$ by prescribing the same probabilities to actions as $\pi'$ does. Hence, $\MDP^{\strategy}(\distribution_i) = \distribution_{i+1}$ and so $\pi$ induces the same stream of distributions $\distribution_0, \distribution_1, \dots, \distribution_N$ which by assumption satisfies $(T,H)$-reach-avoidance. Since $\distribution_0,\dots,\distribution_{N}$ are all distinct, this construction is indeed well-defined. Thus, our construction ensures that $\pi$ is distributionally memoryless, which concludes the proof.
		
		\item {\em Universal-$(T,H)$-reach-avoidance} Suppose that there exists a policy $\pi'$ in $\MDP$ such that, for every initial distribution $\mu_0 \in \Init$, $\MDP$ satisfies $(T,H)$-reach-avoidance from $\mu_0$ under $\pi'$. In order to prove the theorem, we need to show that there exists a distributionally memoryless policy $\strategy$ such that, for every $\mu_0 \in \Init$, $\MDP$ satisfies $(T,H)$-reach-avoidance from $\mu_0$ under $\strategy$. We obtain a distributionally memoryless policy $\pi$ from $\pi'$ as follows.
		
		For each $\mu_0$, let $\textsc{stream}^T_{\mu_0,\pi'}$ be the finite prefix of the stream induced by $\mu_0$ under $\pi'$ until $T$ is reached. We may without loss of generality assume that all distributions contained in $\textsc{stream}^T_{\mu_0,\pi'}$ are distinct, for each $\mu_0 \in \Init$. To see this, observe that we may inductively refine $\pi'$ on histories of length $i=0,1,\dots$ to ensure that the first $i$ distributions in each stream are pairwise distinct. The refinement procedure is analogous to the construction of $\pi''$ from $\pi'$ in the proof above, in the sense that whenever a repeated distribution occurs in a stream, the refined policy removes the induced cycle from the stream.
		
		Before proving the claim, we introduce some additional notation. First, let
		\[ \textsc{Reach}^T_{\pi'} = \cup_{\mu_0 \in \Init} \, \textsc{stream}^T_{\mu_0,\pi'} \]
		be the set of all reachable distributions from any initial distribution $\mu_0 \in \Init$ under $\pi'$ until the target set $T$ is reached. Next, for each $\mu \in \textsc{Reach}^T_{\pi'}$, define its {\em distance} to $T$ as the smallest number of steps in which a distribution in $T$ is reached from $\mu$ along any stream $\textsc{stream}^T_{\mu_0,\pi'}$ that contains $\mu$. Finally, for each $\mu \in \textsc{Reach}^T_{\pi'}$, let $\textsc{min-stream}(\mu)$ be a stream that contains $\mu$ and achieves the minimum distance from $\mu$ to $T$ (in case of several such streams, we pick one for each $\mu$). Then, if $\mu \in \textsc{Reach}^T_{\pi'} \backslash T$, define $\textsc{Next}(\mu)$ to be the sussessive distribution of $\mu$ in the stream $\textsc{min-stream}(\mu)$. On the other hand, if $\mu \in \textsc{Reach}^T_{\pi'} \cap T$, then $\mu$ is the last element of $\textsc{min-stream}(\mu)$ and so we do not define $\textsc{Next}(\mu)$.
		
		We are now ready to define the distributionally memoryless policy $\pi$. Consider an initial distribution $\mu_0 \in \Init$ and a finite path $\finitepath = s_0 a_0 s_1 a_1 \dots s_t$. Let $\mu_t$ be the $t$-th distribution in the stream induced from $\mu_0$ by $\pi'$. If $\mu_t \not\in \textsc{Reach}^T_{\pi'}$, define $\pi(\finitepath)$ arbitrarily such that $\pi$ is distributionally memoryless. Otherwise, let $(P_{s,a})_{s\in\States,a\in\Actions}$ be a matrix that transforms $\mu_t$ into $\textsc{Next}(\mu_t)$ under $\pi'$, i.e.~such that $\textsc{Next}(\mu_t) = P \cdot \mu_t$. Note that $P$ exists and is well-defined: since $\mu_t$ and $\textsc{Next}(\mu_t)$ are successive distributions in the stream $\textsc{stream}(\mu_t)$ under $\pi'$, $P$ is the distribution transformer induced by $\pi'$ in this stream. Then, if $\mu_t(s_t) = 0$, we define $\pi(\finitepath)$ arbitrarily. Else, for each action $\action \in \Actions(s_t)$, we define
		\[ \pi(\finitepath)(a) = \frac{P_{s, a}}{\sum_{\action' \in \Actions(s_t)}P_{s, a'}}. \]
		In words, $\pi$ is a distributionally memoryless policy that transforms $\mu_t$ into $\textsc{Next}(\mu_t)$ for every finite history $\finitepath = s_0 a_0 s_1 a_1 \dots s_t$ by prescribing the same probabilities to actions as $\pi'$ does.
		
		We are left to prove that the construction of our distributional memoryless policy $\pi$ also ensures that, for every $\mu_0 \in \Init$, $\MDP$ satisfies $(T,H)$-reach-avoidance from $\mu_0$ under $\strategy$. To see this, fix $\mu_0 \in \Init$. Then, the stream of distributions induced from $\mu_0$ under $\pi$ is
		\[ \mu_0, \textsc{Next}(\mu_0), \textsc{Next}(\textsc{Next}(\mu_0)), \dots \]
		Note that, since each distribution in the above stream belongs to stream $\textsc{stream}^T_{\mu_0,\pi'}$ for some $\mu_0$ and since $\textsc{stream}^T_{\mu_0,\pi'} \subseteq H$ for each $\mu_0 \in \Init$ due to $\MDP$ satisfying $(T,H)$-reach-avoidance from $\mu_0$ under $\pi'$, we conclude that this stream is also fully contained in $H$. On the other hand, our definition of $\textsc{Next}(\mu_0)$ which chooses a successor that minimizes distance to $T$ also ensures that the distance to $T$ strictly decreases along the stream $ \mu_0, \textsc{Next}(\mu_0), \textsc{Next}(\textsc{Next}(\mu_0)), \dots$. Hence, this stream must eventually reach a distribution in $T$, showing that $\MDP$ satisfies $(T,H)$-reach-avoidance from $\mu_0$ under $\strategy$. Since the initial distribution $\mu_0$ was arbitrary, this concludes our proof.
	\end{enumerate}
\end{proof}

\section{Proof of Theorem~2}\label{app:certificate}

\begin{theorem*}[Sound and complete certificates]
	Let $\MDP$ be an MDP, $\distribution_0 \in \Init$ and $\strategy$ be a distributionally memoryless policy. Then $\MDP$ satisfies
	\begin{compactenum}
		\item unit-$(T,H)$-reach-avoidance under $\strategy$ iff there exists a distributional $(T,H)$-reach-avoid certificate for $\MDP$ from $\distribution_0$ under $\strategy$.
		\item existential-$(T,H)$-reach-avoidance under $\strategy$ iff there exists a $\mu_0\in\Init$ and distributional $(T,H)$-reach-avoid certificate for $\MDP$ from $\distribution_0$ under $\strategy$.
		\item universal-$(T,H)$-reach-avoidance under $\strategy$ iff there exists a universal distributional $(T,H)$-reach-avoid certificate for $\MDP$ under $\strategy$.
	\end{compactenum}
\end{theorem*}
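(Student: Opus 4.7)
The plan is to prove each of the three equivalences separately, with the unit case being the core argument that the existential and universal cases adapt. For each case I would establish soundness (certificate implies reach-avoidance) and completeness (reach-avoidance implies existence of a certificate).

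For soundness in the unit case, I would fix a certificate $(R,I)$ and consider the induced stream $\mu_0,\mu_1,\mu_2,\dots$ under $\strategy$ from $\mu_0$. Using conditions~1 and~2 of Definition~\ref{def:certificate} inductively, as long as $\mu_i \in I \setminus T$ we have $\mu_{i+1} \in I$, so by condition~3 every such $\mu_i$ lies in $H$. To conclude that $T$ is actually reached, I would argue by contradiction: if $\mu_i \in I \setminus T$ for all $i$, then condition~5 yields $R(\mu_{i+1}) \le R(\mu_i) - 1$, hence $R(\mu_i) \le R(\mu_0) - i$, which eventually contradicts condition~4. Soundness for the existential and universal cases is essentially the same argument, applied at one chosen initial distribution or uniformly at every $\mu_0 \in \Init$ respectively, with condition~1 strengthened to $\Init \subseteq I$ in the universal case.

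For completeness in the unit case, I would follow the recipe indicated in the excerpt: let $k$ be the least index with $\mu_k \in T$, set $I = \{\mu_0,\dots,\mu_k\}$, and define $R(\mu_i) = \max\{0, k-i\}$ on $I$ (and, say, $R \equiv 0$ elsewhere). Conditions~1, 3, 4 and~5 follow directly from the assumed reach-avoidance and the definition of $R$. The only nontrivial point is condition~2, which requires that $\MDP^\strategy(\mu_i) = \mu_{i+1}$ when $\strategy$ is re-applied at $\mu_i$ as if from a fresh start; this is exactly where the hypothesis that $\strategy$ is distributionally memoryless is essential, since it guarantees that the one-step transformer $\MDP^\strategy(\cdot)$ is well-defined as a function of the current distribution alone, and in particular the same one-step image arises whether $\mu_i$ is the $i$-th term of the original stream or its starting distribution. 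The existential case then follows immediately by taking a specific $\mu_0 \in \Init$ witnessing existential reach-avoidance.

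The hardest part will be the universal completeness direction, since a single pair $(R,I)$ must now work for every $\mu_0 \in \Init$ simultaneously. I would take $I$ to be the union over $\mu_0 \in \Init$ of the reach-avoid prefixes described above, and for each $\mu \in I$ define $R(\mu)$ as the number of remaining steps until $T$ is first hit along the (unique) stream starting from $\mu$ under $\strategy$. Distributional memorylessness is again the key tool: it guarantees that the continuation of $\strategy$ from any $\mu \in I$ is uniquely determined, so that even if two distinct $\mu_0, \mu_0' \in \Init$ visit the same intermediate $\mu$, the value $R(\mu)$ is unambiguous and condition~2 holds by construction. Once this well-definedness is in hand, the verification of the strengthened condition~1 together with conditions~2--5 parallels the unit case.
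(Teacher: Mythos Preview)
Your proposal is correct and follows essentially the same approach as the paper's proof in Appendix~\ref{app:certificate}. The only point the paper makes explicit that you leave implicit is that in the unit/existential completeness direction the distributions $\mu_0,\dots,\mu_k$ must be pairwise distinct for $R(\mu_i)=\max\{0,k-i\}$ to be well-defined; this follows immediately from the same observation you already use for condition~2, namely that distributional memorylessness makes $\MDP^\strategy(\cdot)$ a function of the current distribution, so a repetition before step $k$ would force a cycle never reaching $T$.
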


\begin{proof}
	We first prove the theorem for unit and existential $(T,H)$-reach-avoidance. We then modify the proof to extend it to the universal-$(T,H)$-reach-avoidance case as well.
	\begin{enumerate}
		\item {\em Unit/existential-$(T,H)$-reach-avoidance.}
		\begin{enumerate}
			\item {\em Proof of $\Longrightarrow$ direction.} Suppose that $\MDP$ satisfies unit/existential-$(T,H)$-reach-avoidance from $\mu_0 \in \Init$ under some distributionally memoryless policy $\strategy$. We need to show that there exists a distributional $(T,H)$-reach-avoid certificate for $\MDP$ from $\mu_0$ under~$\strategy$.
			
			Let $\distribution_0,\distribution_1,\distribution_2,\dots$ be the stream of distributions induced by policy $\strategy$ from the initial distribution $\distribution_0$. Then there exists the smallest $N\in\mathbb{N}$ such that $\distribution_N \in T$ and $\distribution_0,\dots,\distribution_{N-1} \in H \backslash T$. Define the distributional invariant $I \subseteq \Distributions(S)$ via
			\[ I = \{\distribution_0, \distribution_1, \dots, \distribution_N\} \]
			and the distributional ranking function $R: \Distributions(S) \rightarrow \mathbb{R}$ via
			\[ R(\mu) = \begin{cases}
				\max\{0, N - i \}, &\text{if } \distribution = \distribution_i \text{ for some } i \\
				0, &\text{otherwise}
			\end{cases}
			\]
			Note that, since $\pi$ is distributionally memoryless, all $\distribution_0,\distribution_1,\dots,\distribution_N$ must be distinct as otherwise we would have a cycle and $\distribution_N \in T$ would not be reached. Hence, $R$ is well-defined.
			One easily verifies by inspection that $\certificate = (R, I)$ satisfies all the conditions in the definition of distributional reach-avoid certificates. Indeed, $I = \{\distribution_0, \distribution_1, \dots, \distribution_N\}$ contains the initial distribution $\distribution_0$, is inductive until $T$ is reached and is contained within the safe set $H$. On the other hand, $R$ is nonnegative within $I$ and strictly decreases by $1$ until $\distribution_N \in T$ is reached. Hence, $\certificate = (R, I)$ satisfies all the defining conditions, as claimed.
			
			\item {\em Proof of $\Longleftarrow$ direction.} Suppose now that there exists a distributional $(T,H)$-reach-avoid certificate for $\MDP$ from $\mu_0$ under some distributionally memoryless policy $\strategy$. We need to show that $\MDP$ satisfies unit/existential-$(T,H)$-reach-avoidance from $\mu_0$ under $\strategy$.
			
			Let $\distribution_0,\distribution_1,\distribution_2,\dots$ be the stream of distributions induced by policy $\strategy$ from the initial distribution $\distribution_0$, and consider the sequence $R(\distribution_0), R(\distribution_1), R(\distribution_2), \dots$ of values of the ranking function $R$ along this sequence. By conditions~4 and~5 in the Definition of distributional reach-avoid certificates, the sequence is nonnegative and it decreases by at least $1$ at every step until a distribution in $T$ is reached. Hence, as a sequence of nonnegative numbers cannot decrease by at least $1$ indefinitely, there exists the smallest $N\in\mathbb{N}$ such that $\distribution_N \in T$ and $\distribution_0,\dots,\distribution_{N-1} \not\in T$. 
			
			Now, by condition~1 in the definition it follows that $\distribution_0 \in I$. Furthermore, by condition~2 and by induction on $i$, it follows that $\distribution_i \in I \backslash T$ for each $0 \leq i \leq N-1$ since we showed above that $\distribution_0,\dots,\distribution_{N-1} \not\in T$. Finally, by condition~3, since $I \subseteq H$ it follows that $\distribution_0,\dots,\distribution_{N-1} \in I \subseteq H$. Hence, the stream $\distribution_0,\distribution_1,\distribution_2,\dots$ reaches a distribution in $T$ while staying within $H$ in the process. This concludes the proof that  $\MDP$ satisfies unit/existential-$(T,H)$-reach-avoidance from $\mu_0$ under $\strategy$.
		\end{enumerate}
		
		\item {\em Universal-$(T,H)$-reach-avoidance.}
		\begin{enumerate}
			\item {\em Proof of $\Longrightarrow$ direction.} Suppose that $\MDP$ satisfies universal-$(T,H)$-reach-avoidance under some distributionally memoryless policy $\strategy$. We need to show that there exists a universal distributional $(T,H)$-reach-avoid certificate for $\MDP$ under $\strategy$.
			
			For each initial distribution $\mu_0 \in \Init$, let $\distribution_0,\distribution_1,\distribution_2,\dots$ be the stream of distributions induced from $\mu_0$ under $\strategy$. Then there exists the smallest $N\in\mathbb{N}$ such that $\distribution_N \in T$ and $\distribution_0,\dots,\distribution_{N-1} \in H \backslash T$. Let $\textsc{stream}^T_{\mu_0,\pi} = \{\mu_0,\dots,\mu_T\}$.
			
			Define the distributional invariant $I \subseteq \Distributions(S)$ via
			\[ I = \cup_{\mu_0\in\Init}\, \textsc{stream}^T_{\mu_0,\pi}. \]
			Then, for each $\mu \in I$, define the {\em distance} $\textsc{distance}^T_{\pi}(\mu)$ of $\mu$ from $T$ to be the number of steps when applying the distributionally memoryless policy $\pi$ to $\mu$ until a distribution in $T$ is reached. Finally, define the distributional ranking function $R: \Distributions(S) \rightarrow \mathbb{R}$ via
			\[ R(\mu) = \begin{cases}
				\textsc{distance}^T_{\pi}(\mu), &\text{if } \mu \in I \\
				0, &\text{otherwise}
			\end{cases}
			\]
			One easily verifies by inspection that $\certificate = (R, I)$ satisfies all the conditions in the definition of universal distributional reach-avoid certificates. Indeed, $I$ contains all initial distribution in $\Init$ and is inductive under $\pi$ until $T$ is reached since it contains all streams induced by $\pi$ when starting in any initial distribution in $\Init$. Moreover, we have $I \subseteq H$ since each $\textsc{stream}^T_{\mu_0,\pi} \subseteq H$, due to $\MDP$ satisfying universal-$(T,H)$-reach-avoidance under $\pi$. Finally, $R$ is nonnegative within $I$ and strictly decreases by $1$ until a distribution in $T$ is reached. Hence, $\certificate = (R, I)$ satisfies all the defining conditions, as claimed.
			
			\item {\em Proof of $\Longleftarrow$ direction.} Suppose now that there exists a universal distributional $(T,H)$-reach-avoid certificate for $\MDP$ under some distributionally memoryless policy $\strategy$. We need to show that $\MDP$ satisfies universal-$(T,H)$-reach-avoidance under $\strategy$.
			
			Let $\mu_0 \in \Init$ be an initial distribution and let $\distribution_0,\distribution_1,\distribution_2,\dots$ be the stream of distributions induced by policy $\strategy$ from the initial distribution $\distribution_0$. Consider the sequence $R(\distribution_0), R(\distribution_1), R(\distribution_2), \dots$ of values of the ranking function $R$ along this sequence. By conditions~4 and~5 in the Definition of distributional reach-avoid certificates, the sequence is nonnegative and it decreases by at least $1$ at every step until a distribution in $T$ is reached. Hence, as a sequence of nonnegative numbers cannot decrease by at least $1$ indefinitely, there exists the smallest $N\in\mathbb{N}$ such that $\distribution_N \in T$ and $\distribution_0,\dots,\distribution_{N-1} \not\in T$. 
			
			Now, by condition~1 in the definition it follows that $\distribution_0 \in I$. Furthermore, by condition~2 and by induction on $i$, it follows that $\distribution_i \in I \backslash T$ for each $0 \leq i \leq N-1$ since $\distribution_0,\dots,\distribution_{N-1} \not\in T$. Finally, by condition~3, since $I \subseteq H$ it follows that $\distribution_0,\dots,\distribution_{N-1} \in I \subseteq H$. Hence, the stream $\distribution_0,\distribution_1,\distribution_2,\dots$ reaches a distribution in $T$ while staying within $H$ in the process. This concludes the proof that  $\MDP$ satisfies unit-$(T,H)$-reach-avoidance from $\mu_0$ under $\strategy$ for every $\mu_0 \in \Init$. Since the initial distribution $\mu_0$ was arbitrary, we conclude that $\MDP$ satisfies universal-$(T,H)$-reach-avoidance under $\strategy$.
		\end{enumerate}
	\end{enumerate}
\end{proof}

\section{Quantifier Elimination in Section~5}\label{app:quantifierelim}

In order to remove quantifier alternation in constraints constructed in Step~2 of the algorithm in Section~5, we first observe that constraints $\Phi_1,\Phi_2,\Phi_4,\Phi_5$ constructed in Step~2 can all be expressed as conjunctions of finitely many constraints of the form
\begin{eqnarray}\label{eq:hornapp}
	\begin{split}
		\forall\mathbf{x}\in\mathbb{R}^n.\ &(\textrm{affexp}_1(\mathbf{x}) \bowtie 0) \land \dots \land (\textrm{affexp}_N(\mathbf{x}) \bowtie 0) \\
		&\Longrightarrow (\textrm{affexp}(\mathbf{x}) \geq 0),
	\end{split}
\end{eqnarray}
with each $\textrm{affexp}_i(\mathbf{x})$ and $\textrm{affexp}(\mathbf{x})$ being an affine expression over $\mathbf{x}$ whose affine coefficients are either concrete real values or symbolic template variables, and each $\bowtie \in \{\geq, >\}$. Note that inequalities on the left-hand-side of the implication may be both strict and non-strict, however the inequalities on the right-hand-side of each of $\Phi_1,\Phi_2,\Phi_4,\Phi_5$ are always non-strict. This is because we assumed that the template for $I$ is specified via non-strict affine inequalities.

The algorithm now translates each constraint of the form as in eq.~\eqref{eq:hornapp} into a system of equisatisfiable and purely existentially quantified constraints. The translation is based on the result of~\cite[Corollary~1]{AsadiC0GM21} which is an extension of Farkas' lemma~\cite{farkas1902theorie} that allows strict inequalities on the left-hand-side of the implication. 

\begin{lemma}[\cite{AsadiC0GM21}]\label{lemma:strengthenedfarkas}
	Let $\mathcal{X} = \{x_1,\dots,x_n\}$ be real-valued variables, and consider a system of affine inequalities over $\mathcal{X}$:
	\begin{equation*}
		\Phi : \begin{cases}
			c^1_0 + c^1_1\cdot x_1 + \dots + c^1_n \cdot x_n \bowtie 0 \\
			\qquad \qquad \qquad \vdots \\
			c^N_0 + c^N_1\cdot x_1 + \dots + c^N_n \cdot x_n \bowtie 0 \\
		\end{cases}.
	\end{equation*}
	with each $\bowtie \in \{\geq, >\}$. Suppose that $\Phi$ is satisfiable.
	Then $\Phi$ entails a non-strict affine inequality $\phi \equiv c_0 + c_1 \cdot x_1 + \dots + c_n \cdot x_n \geq 0$, i.e.\ $\Phi \Longrightarrow \phi$, if and only if $\phi$ can be written as a \emph{non-negative} linear combination of $1 \geq 0$ and affine inequalities in $\Phi$, i.e.\ if and only if there exist $y_0,y_1,\dots,y_n\geq 0$ such that $c_0 = y_0 + \sum_{j=1}^N y_j \cdot c^j_0$, $c_1 = \sum_{j=1}^N y_j \cdot c^j_1$, \dots, $c_n = \sum_{j=1}^N y_j \cdot c^j_n$.
\end{lemma}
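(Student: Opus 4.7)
}
The ``if'' direction is the routine verification part: given non-negative coefficients $y_0,y_1,\dots,y_N$ witnessing the decomposition, I would substitute any point $x$ satisfying $\Phi$ into the identity $c_0 + \sum_i c_i x_i = y_0 + \sum_{j=1}^N y_j (c_0^j + \sum_i c_i^j x_i)$ and observe that each summand on the right-hand side is non-negative, since $y_j \geq 0$ and $c_0^j + \sum_i c_i^j x_i \bowtie 0$. Hence $c_0 + \sum_i c_i x_i \geq 0$, i.e.\ $\phi$ holds.

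The ``only if'' direction is the content of the lemma and the main obstacle, because classical Farkas' lemma is formulated for systems of non-strict inequalities and does not handle the strict ones directly. The plan is to \emph{reduce to the non-strict case}. Let $\Phi^{\geq}$ denote the system obtained from $\Phi$ by relaxing every strict inequality $c^j_0 + \sum_i c^j_i x_i > 0$ to $c^j_0 + \sum_i c^j_i x_i \geq 0$. I will argue that under the hypotheses of the lemma, one still has $\Phi^{\geq} \Longrightarrow \phi$; once this is established, the classical (affine) Farkas lemma applied to the satisfiable system $\Phi^{\geq}$ and the non-strict target $\phi$ yields non-negative multipliers $y_0,y_1,\dots,y_N \geq 0$ such that $c_0 = y_0 + \sum_j y_j c^j_0$ and $c_i = \sum_j y_j c^j_i$ for $i = 1,\dots,n$, which is exactly the claimed decomposition (the coefficient $y_0$ accounts for the slack permitted by the auxiliary inequality $1 \geq 0$).

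To show $\Phi^{\geq} \Longrightarrow \phi$, I would argue by contradiction: suppose some point $y \in \mathbb{R}^n$ satisfies all inequalities in $\Phi^{\geq}$ but violates $\phi$, so $c_0 + \sum_i c_i y_i < 0$. By assumption $\Phi$ is satisfiable, so fix a point $x_0 \in \mathbb{R}^n$ satisfying $\Phi$; in particular every strict inequality of $\Phi$ evaluates strictly positively at $x_0$. Consider the convex combination $z_t = (1-t) y + t x_0$ for $t \in (0,1]$. Each non-strict inequality of $\Phi$ is preserved under convex combinations of points satisfying it, so $z_t$ satisfies every non-strict inequality of $\Phi$; for each strict inequality $c^j_0 + \sum_i c^j_i x_i > 0$, evaluating at $z_t$ gives $(1-t)(c^j_0 + \sum_i c^j_i y_i) + t (c^j_0 + \sum_i c^j_i x_{0,i}) \geq t(c^j_0 + \sum_i c^j_i x_{0,i}) > 0$, since the first term is $\geq 0$ (because $y$ satisfies $\Phi^{\geq}$) and the second is strictly positive. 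Hence $z_t$ satisfies $\Phi$ for every $t \in (0,1]$, and by the hypothesis of the lemma $\phi(z_t) \geq 0$. Taking the limit $t \to 0^+$ and using continuity of the affine form, $\phi(y) \geq 0$, contradicting $\phi(y) < 0$.

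The main obstacle is this strict-to-non-strict reduction: the classical Farkas lemma is the workhorse, but one has to ensure (i) that $\Phi^{\geq}$ remains satisfiable so that classical Farkas applies in its affine form with a non-negative constant term, and (ii) that no solution of $\Phi^{\geq}$ escapes the entailment, which is exactly what the convex-combination argument handles by using the satisfiability of the original strict system to ``push'' any boundary point of $\Phi^{\geq}$ back into the interior defined by the strict inequalities.
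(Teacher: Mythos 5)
Your proof is correct. The paper itself does not prove this lemma but imports it verbatim as \cite[Corollary~1]{AsadiC0GM21}, so there is no in-paper proof to compare against; what you give is a self-contained derivation.

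Your route --- reduce the mixed strict/non-strict system $\Phi$ to its non-strict relaxation $\Phi^{\geq}$, show via the convex-combination perturbation $z_t = (1-t)y + t x_0$ that entailment of the \emph{non-strict} target $\phi$ survives the relaxation (using satisfiability of $\Phi$ to supply a strictly interior point $x_0$, and continuity of the affine form to pass $t \to 0^+$), and then invoke the classical affine Farkas lemma on the satisfiable system $\Phi^{\geq}$ --- is sound. Both directions check out: in the ``if'' direction, every summand $y_j(c_0^j + \sum_i c_i^j x_i)$ is non-negative by hypothesis; in the ``only if'' direction, the slack $y_0 = c_0 - \sum_j \lambda_j c_0^j \geq 0$ extracted from the affine Farkas conclusion plays exactly the role of the multiplier of the trivial inequality $1 \geq 0$. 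The one thing worth flagging is that this argument crucially relies on the \emph{conclusion} $\phi$ being non-strict; the same perturbation trick would not let you deduce a strict conclusion in the limit $t \to 0^+$, which is precisely why the cited result is stated this way (and why the paper later needs a separate Handelman-style tool, Lemma~\ref{lemma:handelman}, when a strict right-hand side arises).
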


The algorithm applies Lemma~\ref{lemma:strengthenedfarkas} to a constraint of the form as in eq.~\eqref{eq:hornapp} as follows. Suppose that 
\[ \textrm{affexp}_i(\mathbf{x}) = w_0^i + w_1^i \cdot x_1 + \dots + w_n^i \cdot x_n \]
for each $1 \leq i \leq N$, and
\[ \textrm{affexp}(\mathbf{x}) = w_0+ w_1 \cdot x_1 + \dots + w_n \cdot x_n. \]
The algorithm introduces fresh symbolic variables $y_0,y_1,\dots,y_N$. Then, the constraint in eq.~\eqref{eq:hornapp} is translated into an equisatisfiable system of constraints
\begin{equation*}
	\begin{split}
		&(y_0 \geq 0) \land (y_1 \geq 0) \land \dots \land (y_N \geq 0) \land \\
		&\land w_0 = y_0 + \sum_{j=1}^N y_j \cdot w^j_0 \\
		&\bigwedge_{i=1}^n  w_i = \sum_{j=1}^N y_j \cdot w^j_i.
	\end{split}
\end{equation*}
Intuitively, this system constrains $y_i's$ to be nonnegative and $\textrm{affexp}$ to be an affine combination of $\textrm{affexp}_1,\dots, \textrm{affexp}_N$ with affine coefficients given by $y_0,y_1,\dots,y_N$. By Lemma~\ref{lemma:strengthenedfarkas}, this yields a system of constraints with real-valued variables which is equisatisfiable to the constraint in eq.~\eqref{eq:hornapp}. Furthermore, the resulting system of constraints is purely existentially quantified. Finally, all resulting constraints are at most quadratic, due to products of symbolic template variables and fresh variables $y_i$ introduced by the translation. This concludes our quantifier elimination procedure.

The constraint $\Phi_3 \equiv \forall \mathbf{x} \in \mathbb{R}^{n}.\, I(\mathbf{x}) \Rightarrow H(\mathbf{x})$ is a conjunction of constraints of the form
\begin{eqnarray}\label{eq:horn2}
	\begin{split}
		\forall\mathbf{x}\in\mathbb{R}^n.\ &(\textrm{affexp}_1(\mathbf{x}) \geq 0) \land \dots \land (\textrm{affexp}_N(\mathbf{x}) \geq 0) \\
		&\Longrightarrow (\textrm{affexp}(\mathbf{x}) \bowtie 0),
	\end{split}
\end{eqnarray}
since $I$ is specified in terms of non-strict inequalities but $H$ can be specified in terms of both strict and non-strict inequalities. However, since the set of distributions contained in $I$ is topologically closed and bounded as $I \subseteq \Distributions(\States)$, eq.~\eqref{eq:horn2} is equivalent to the constraint obtained by replacing $\textrm{affexp}(\mathbf{x}) \bowtie 0$ above  by $\textrm{affexp}(\mathbf{x}) \geq \epsilon$, with $\epsilon > 0$ being a newly introduced symbolic variable. The latter constraint is of the form as in eq.~\eqref{eq:hornapp}, hence we may again apply the above translation.

\section{Proof of Theorem~3}\label{app:thmalgo}

\begin{theorem*}
	\emph{Soundness}: If the algorithm returns initial distribution $\distribution_0\in\Init$, memoryless policy $\strategy$ and affine inductive distributional reach-avoid certificate $\certificate$, then, the MDP $\MDP$ satisfies existential-$(T,H)$-reach-avoidance under $\strategy$.
	
	\emph{Relative completeness}: If there exists $\distribution_0 \in \Init$, a memoryless policy $\strategy$ and an affine distributional $(T,H)$-reach-avoid certificate $\certificate$ from $\distribution_0$ under $\strategy$, then there exists a template size $N_I \in \mathbb{N}$ such that $\distribution_0$, $\strategy$ and $\certificate$ are computed by the algorithm.
	
	\emph{Complexity}: The runtime of the algorithm is in PSPACE in the size of the encoding of the MDP, $\Init$, $T$, $H$ and the template size parameter $N_I \in \mathbb{N}$. 
\end{theorem*}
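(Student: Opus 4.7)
The plan is to prove the three claims of Theorem~\ref{thm:firstalgo} separately, each exploiting the properties of the template encoding together with the quantifier-elimination step described in Appendix~\ref{app:quantifierelim}.

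First, for \emph{soundness}, I would observe that by construction the constraints $\Phi_{\text{init}}$ and $\Phi_\pi$ encode exactly that $\mu_0 \in \Init$ and that $\pi$ is a valid memoryless policy, while $\Phi_1, \ldots, \Phi_5$ precisely encode the five conditions of Definition~\ref{def:certificate}. The quantifier elimination step based on the Farkas-like lemma from~\cite{AsadiC0GM21} produces a purely existential system that is equisatisfiable with the original one; the only subtlety here is the rewriting of $\Phi_3$ via the topological closedness of $I \subseteq \Distributions(\States)$ to replace strict inequalities by non-strict ones with a fresh $\epsilon > 0$. Hence, any assignment returned by the SMT solver instantiates the template variables so that $(R,I)$ is genuinely a distributional reach-avoid certificate for $\pi$ from $\mu_0$, and Theorem~\ref{thm:soundandcomplete} yields existential-$(T,H)$-reach-avoidance under $\pi$.

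Second, for \emph{relative completeness}, I would choose the template size $N_I$ equal to the number of affine inequalities defining the invariant $I$ in the hypothetical affine certificate (padding with trivially valid inequalities like $1 \geq 0$ if the algorithm happens to be instantiated with a larger $N_I$). Plugging in the concrete coefficients of $\mu_0$, $\pi$, $R$, and $I$ gives a valid assignment of all the symbolic template variables that satisfies each of $\Phi_{\text{init}}, \Phi_\pi, \Phi_1, \ldots, \Phi_5$ under its original universally quantified form. By equisatisfiability of the quantifier-eliminated system, the SMT solver is guaranteed to return some satisfying assignment (not necessarily identical, but still producing a correctly certified policy by the soundness direction).

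Third, for \emph{complexity}, after quantifier elimination the problem reduces to deciding a sentence in the existential first-order theory of the reals, with at most quadratic constraints whose size is polynomial in $|\States|$, $|\Actions|$, $N_I$, $N_\Init$, $N_T$, and $N_H$. Canny's algorithm decides such sentences in PSPACE, yielding the claimed bound. The main obstacle I anticipate is making the relative completeness proof fully airtight: one must carefully match template variables against an arbitrary witness $(\mu_0,\pi,R,I)$, and verify that the equisatisfiability from the extended Farkas lemma requires only non-strict inequalities on the right-hand side of each implication---a hypothesis that is ensured by our choice to restrict $I$ to non-strict affine inequalities and by the closed-boundedness trick used for $\Phi_3$. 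Once this correspondence is spelled out, soundness and completeness fall out immediately from the equisatisfiability of the two constraint systems.
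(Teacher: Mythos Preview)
Your proposal is correct and follows essentially the same approach as the paper: both arguments hinge on the equisatisfiability of the Step~2 constraints and the Farkas-style translated system, using this to transfer solutions in both directions and then invoking the PSPACE decidability of the existential theory of the reals. You are in fact slightly more explicit than the paper in a couple of places (invoking Theorem~\ref{thm:soundandcomplete} to close the soundness argument, noting that the solver may return a different but still valid witness, and naming Canny's algorithm), but the structure and key ideas are identical.
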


\begin{proof}
	{\em Soundness.}  Suppose that the algorithm returns an initial distribution $\distribution_0\in\Init$, a memoryless policy $\strategy$ and an affine inductive distributional reach-avoid certificate $\certificate$. This means that Step~3 of the algorithm computed a solution $(\distribution_0, \strategy, R, I, y_0, y_1, \dots, y_N)$ to the system of constraints constructed in Step~3 upon quantifier elimination. By Lemma~\ref{lemma:strengthenedfarkas} and the equisatisfiability of two systems of constraints, this means that $(\distribution_0, \strategy, R, I)$ form a solution to the system of constraints constructed in Step~2. Since this system encodes that $\distribution_0 \in \Init$, that $\strategy$ is a memoryless strategy and that $(R,I)$ form a correct distributional reach-avoid certificate, we conclude that theorem follows.
	
	\smallskip\noindent{\em Completeness.} Suppose that there exist an initial distribution $\distribution_0 \in \Init$, a memoryless policy $\strategy$ and an affine distributional $(T,H)$-reach-avoid certificate $\certificate = (R, I)$ for $\distribution_0$ under $\strategy$. We need to show that there exist $y_0, y_1,\dots, y_N \geq 0$ and the minimal template size $N_I$ such that $(\distribution_0, \strategy, R, I, y_1, \dots, y_N)$ is a solution to the system of constraints in Step~3 for the template size $N_I$. To prove this, let $N_I$ be the number of affine inequalities in $I$. Since $\certificate = (R, I)$ is a correct instance of an affine distributional reach-avoid certificate for $\distribution_0$ under $\strategy$, it satisfies all the constraints in the system constructed by Step~2 of the algorithm. Hence, by Lemma~\ref{lemma:strengthenedfarkas} and the equisatisfiability of the system obtained upon quantifier elimination, it follows that there exist $y_0,y_1,\dots, y_N \geq 0$ such that $(\distribution_0,\strategy, R, I, y_0, y_1, \dots, y_N)$ is a solution to the system of constraints in Step~3 for the template size $N_I$. The theorem follows.
	
	\smallskip\noindent{\em Complexity.} Steps~1 and~2 as well as the quantifier elimination procedure run in time which is polynomial in the size of the input and yield a system of constraints which is polynomial in the size of the input. Thus, the query that is given to an SMT-solver in a sentence in the existential first-order theory of the reals of size polynomial in the problem input size, which can be solved in PSPACE. Therefore, the combined computational complexity of the algorithm is in PSPACE.
\end{proof}

\section{Algorithm for General Policies}\label{app:generalpolicies}

We now present the details behind our algorithm for distributionally memoryless policies in Section~6.

\paragraph{Input.} The algorithm takes as input an MDP $\MDP = (\States, \Actions, \transitions)$ together with affine sets of initial distributions $\Init$, target distributions $T$ and safe distributions $H$. It also takes as input the template size parameter $N_I$.

\paragraph{Algorithm Outline.} Analogously as in Section~5, the algorithm employs a template-based synthesis approach and proceeds in three steps. First, it fixes symbolic templates for an initial distribution $\distribution_0 \in \Init$, a {\em distributionally memoryless} policy $\strategy$ and an affine distributional reach-avoid certificate $\certificate = (R, I)$. Second, the algorithm collects a system of constraints over the symbolic template variables that encode that $\distribution_0 \in \Init$, that $\strategy$ is a distributionally memoryless policy and that $\certificate$ is a correct distributional reach-avoid certificate. Third, it solves the resulting system of constraints, with any solution giving rise to a concrete instance of an initial distribution $\distribution_0 \in \Init$, a distributionally memoryless policy $\strategy$ and an affine distributional reach-avoid certificate $\certificate$.

\paragraph{Step 1 -- Fixing Templates.} The algorithm fixes templates for $\distribution_0$, $\strategy$ and $\certificate = (R, I)$:
\begin{compactitem}
	\item {\em Template for $\distribution_0$.} For each MDP state $s_i$, $1\leq i\leq n$, the algorithm introduces a symbolic template variable $m_i$ to encode the probability of initially being in $s_i$.
	\item {\em Template for $\strategy$.} For each state $\state_i$ and action $\action_j$ pair, we treat $p_{\state_i,\action_j}$ as a function $p_{\state_i,\action_j}(\mathbf{x})$ of the current probability distribution over MDP states and set its template to be a quotient of two affine expressions over $\mathbf{x}$, i.e.
	\[ p_{\state_i,\action_j}(\mathbf{x}) = \frac{\alpha^j_0 + \alpha^j_1 \cdot x_1 + \dots + \alpha^j_n \cdot x_n}{\alpha_0 + \alpha_1 \cdot x_1 + \dots + \alpha_n \cdot x_n}, \]
	where $\alpha^j_i$'s and $\alpha_i$'s are the symbolic template variables. The coefficients in the numerator depend both on the state $\state_i$ and the action $\action_j$, whereas the coefficients in the denominators are simply used to normalize the probabilities to be in $[0,1]$ and to sum to $1$ for each $\state_i$. In order to simplify notation later, for each state $\state_i$ and action $\action_j$ we write
	\[ \textrm{num}_{\state_i,\action_j}(\mathbf{x}) = \alpha^j_0 + \alpha^j_1 \cdot x_1 + \dots + \alpha^j_n \cdot x_n \]
	and
	\[ \textrm{den}_{\state_i}(\mathbf{x}) = \alpha_0 + \alpha_1 \cdot x_1 + \dots + \alpha_n \cdot x_n \]	
	\item {\em Template for $R$.} The template for $R$ is defined by introducing $n+1$ real-valued symbolic template variables $r_0, \dots, r_n$ and letting $R  = r_0 + \sum_{i=1}^n r_i \cdot \mu(s_i)$.
	\item {\em Template for $I$.} The template for $R$ is defined by introducing real-valued symbolic template variables $b^j_i$ for each $1 \leq j \leq N_I$ and $0 \leq i \leq n$, with
	\[ I = \Big\{\distribution \in \Distributions(\States) \mid {\bigwedge}_{j=1}^{N_I} (b_0^j + \sum_{i=1}^n b_i^j \cdot \distribution(s_i)) \geq 0 \Big\}. \]
\end{compactitem}

\paragraph{Step 2 -- Constraint Collection.} The algorithm now collects constraints over the template variables that together encode that $\distribution_0 \in \Init$, that $\strategy$ is a memoryless policy and that $\certificate$ is a distributional reach avoid certificate. The constraint $\Phi_{\textrm{init}}$ encodes that $\distribution_0 \in \Init$, $\Phi_{\strategy}$ encodes that $\strategy$ is a memoryless policy, whereas $\Phi_1,\dots,\Phi_5$ encoding the five defining conditions of distributional reach-avoid certificates:
\begin{compactitem}
	\item $\Phi_{\textrm{init}} \equiv (\mu_0 \in \Init) \equiv {\bigwedge}_{j=1}^{N_{\Init}} (a_0^j + \sum_{i=1}^n a_i^j \cdot m_i) \bowtie 0$, where recall $a_i^j$'s are real constants defining $\Init$.
	\item The definition of $\Phi_{\strategy}$ requires particular care, since we need to ensure that the template defined by a quotient of two symbolic affine expressions indeed defines valid probability values and that it cannot induce division by $0$. Thus, we define the constraint
	\begin{equation*}
		\begin{split}
			\Phi_{\strategy} \equiv &\forall \mathbf{x} \in \mathbb{R}^{n}.\, I(\mathbf{x}) \Rightarrow  \\
			&\bigwedge_{i=1}^n \Big(\textrm{den}_{\state_i}(\mathbf{x}) \geq 1 \\
			&\land \sum_{j=1}^{|\Actions|} \textrm{num}_{\state_i,\action_j}(\mathbf{x}) = \textrm{den}_{\state_i}(\mathbf{x}) \\
			&\land \bigwedge_{j=1}^{|\Actions|} (\textrm{num}_{\state_i,\action_j}(\mathbf{x})  \geq 0) \Big)
		\end{split}
	\end{equation*}
	\item $\Phi_1 \equiv \forall \mathbf{x} \in \mathbb{R}^{n}.\, \Init(\mathbf{x}) \Rightarrow I(\mathbf{x})$.
	\item $\Phi_2 \equiv \forall \mathbf{x} \in \mathbb{R}^{n}.\, I(\mathbf{x}) \Rightarrow I(\mathrm{step}(\mathbf{x}))$, where $\mathrm{step}(\mathbf{x})(x_i) = \sum_{\state_k \in \States, \action_j \in \Actions(s_k)} p_{\state_k, \action_j} \cdot \transitions(\state_k, \action_j, \state_i) \cdot x_j$ yields the distribution after applying one step of the policy $\strategy$ to the distribution $\mathbf{x}$.
	\item $\Phi_3 \equiv \forall \mathbf{x} \in \mathbb{R}^{n}.\, I(\mathbf{x}) \Rightarrow R(\mathbf{x}) \geq 0$.
	\item $\Phi_4 \equiv \forall \mathbf{x} \in \mathbb{R}^{n}.\, I(\mathbf{x}) \Rightarrow H(\mathbf{x})$.
	\item $\Phi_5 \equiv \forall \mathbf{x} \in \mathbb{R}^{n}.\, I(\mathbf{x}) \backslash T(\mathbf{x}) \Rightarrow R(\mathbf{x}) \geq R(\mathrm{step}(\mathbf{x})) - 1$, where $\mathrm{step}(\mathbf{x})$ is defined as above.
\end{compactitem}

\paragraph{Step 3 -- Constraint Solving.} The initial constraint $\Phi_{\textrm{init}}$ is purely existentially quantified over the symbolic template variables. However, the constraint $\Phi_{\strategy}$ as well as $\Phi_1,\dots,\Phi_5$ all contain alternation of an existential quantifier over the symbolic template variables followed by a universal quantifier over the distribution $\mathbf{x}\in\mathbb{R}^n$ over the MDP state probabilities. Our algorithm first translates these constraints into equisatisfiable \emph{purely existentially quantified} constraints, before feeding the resulting constraints to an off-the-shelf solver.

In order to remove quantifier alternation, we first observe that $\Phi_{\strategy},\Phi_1,...,\Phi_5$ can all be expressed as conjunctions of finitely many constraints of the form
\begin{eqnarray}\label{eq:hornpoly}
	\begin{split}
		\forall\mathbf{x}\in\mathbb{R}^n.\ &(\textrm{affexp}_1(\mathbf{x}) \bowtie 0) \land \dots \land (\textrm{affexp}_N(\mathbf{x}) \bowtie 0) \\
		&\Longrightarrow (\textrm{polyexp}(\mathbf{x}) \bowtie 0),
	\end{split}
\end{eqnarray}
with each $\textrm{affexp}_i(\mathbf{x})$ being an affine expression over $\mathbf{x}$ and $\textrm{poly}(\mathbf{x})$ a polynomial expression over $\mathbf{x}$, whose affine and polynomial coefficients are either concrete real values or symbolic template variables, and each $\bowtie \in \{\geq, >\}$. The reason why we now may have polynomial expressions on the right-hand-side, in contrast to the algorithm in Section~5, is that in $\Phi_2$ and $\Phi_5$ we now have templates for $p_{\state_i,\action_j}(\mathbf{x})$ which are quotients of affine expressions. In order to remove the quotients, the algorithm multiplies them out which gives rise to polynomial expressions over symbolic template variables. This means that we cannot use a Farkas' lemma style result in order to remove quantifier alternation, since Farkas' lemma requires expressions on both sides of the implication to be affine.

To overcome this challenge and remove quantifier alternation, we utilize the following result of~\cite[Corollary~3]{AsadiC0GM21}. This result is a generalization of Handelman's theorem, which can intuitively be viewed as an extension of Farkas' lemma that allows strict polynomial inequalities on the right-hand-side of implications. Hence, the result below provides a recipe to translate a constraint as in eq.~\eqref{eq:hornpoly} into a system of purely existentially quantified constraints.

\begin{lemma}[\cite{AsadiC0GM21}] \label{lemma:handelman}
	Let $\mathcal{X} = \{x_1,\dots,x_n\}$ be a finite set of real-valued variables, and consider the following system of $N\in\mathbb{N}$ affine inequalities over $\mathcal{X}$:
	\begin{equation*}\Phi:\, \begin{cases}
			c^1_0 + c^1_1\cdot x_1 + \dots + c^1_n \cdot x_n \bowtie 0 \\
			\qquad \qquad \qquad \vdots \\
			c^N_0 + c^N_1\cdot x_1 + \dots + c^N_n \cdot x_n \bowtie 0
		\end{cases}.
	\end{equation*}
	with each $\bowtie \in \{\geq,>\}$. Let $\textrm{Prod}(\Phi) = \{ \prod_{i=1}^t \phi_i  \mid t \in \mathbb{N}_0, \phi_i \in \Phi \}$ be the set of all products of finitely many affine expressions in $\Phi$, where the product of $0$ affine expressions is a constant expression $1$.
	Suppose that $\Phi$ is satisfiable and that $\{\mathbf{y} \mid \mathbf{y} \models \Phi\}$, the set of values satisfying $\Phi$, is a bounded set.
	Then $\Phi$ entails a strict polynomial inequality $\phi(\mathbf{x}) > 0$, i.e.\ $\Phi \Longrightarrow \phi(\mathbf{x}) > 0$, if and only if $\phi$ can be written as a non-negative linear combination of finitely many products in $\textrm{Prod}(\Phi)$, i.e.\ if and only if there exist $y_0 > 0$ and $y_1,\dots,y_n\geq 0$ and $\phi_1,\dots,\phi_n\in\textrm{Prod}(\Phi)$ such that $\phi = y_0 + y_1 \cdot \phi_1 + \dots + y_n \cdot \phi_n$.
\end{lemma}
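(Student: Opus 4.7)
The plan is to prove the two directions of the biconditional separately, with the ``if'' direction being elementary and the ``only if'' direction carrying the real content. For the ``if'' direction, I would observe that for any $\mathbf{x} \models \Phi$ every affine expression in $\Phi$ evaluates to a non-negative real at $\mathbf{x}$ (the distinction between $\geq$ and $>$ being irrelevant for a lower bound of $0$), so each product $\phi_i(\mathbf{x}) \geq 0$, and hence $\phi(\mathbf{x}) = y_0 + \sum_{i} y_i\,\phi_i(\mathbf{x}) \geq y_0 > 0$, giving the strict inequality required.

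For the ``only if'' direction, my strategy is to reduce to the classical Handelman Positivstellensatz, which asserts that a polynomial strictly positive on a compact polytope $K = \{f_1 \geq 0, \dots, f_m \geq 0\}$ admits a representation as a non-negative linear combination of products of the $f_i$'s. To set this up, I would partition $\Phi$ into its non-strict part $\{f_i \geq 0\}$ and its strict part $\{g_j > 0\}$, and let $\overline{K}$ denote the closure of the solution set of $\Phi$, which is compact by the boundedness hypothesis.

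The \textbf{main obstacle} is that $\phi$ is only assumed strictly positive on the open set $\{\mathbf{y} \models \Phi\}$ rather than on the full closure $\overline{K}$, so classical Handelman does not apply verbatim: on the boundary points of $\overline{K}$ at which some $g_j$ vanishes, $\phi$ could drop to $0$. I would overcome this by a perturbation argument: for each $\epsilon > 0$, consider the compact polytope $K_\epsilon := \{f_i \geq 0\} \cap \{g_j \geq \epsilon\}$, which is contained in $\{\mathbf{y} \models \Phi\}$ and therefore satisfies $\phi > 0$ on it. Applying Handelman on $K_\epsilon$ produces a representation of $\phi$ as a non-negative combination of products of the $f_i$'s and the shifted polynomials $g_j - \epsilon$; expanding the latter binomially recovers a non-negative combination of products drawn from $\Phi$ itself. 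The delicate technical ingredient here is a uniform degree bound on such representations, independent of $\epsilon$, which can be obtained either via a degree-bounded version of Handelman's theorem or by a normalization argument on the relevant convex cone. Such a bound reduces the problem to finitely many candidate monomials, allowing extraction of a convergent subsequence of coefficients as $\epsilon \to 0^{+}$ and yielding a representation of $\phi$ in terms of products from $\textrm{Prod}(\Phi)$.

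A final bookkeeping step secures $y_0 > 0$ rather than merely $y_0 \geq 0$: I would apply the preceding argument to $\phi - \eta$ in place of $\phi$ for a small $\eta > 0$ chosen so that $\phi - \eta$ remains strictly positive on some $K_\epsilon$ (possible because $\phi$ is bounded below by a positive constant on each such compact subset by continuity and compactness). The resulting representation of $\phi - \eta$, with the constant $\eta$ added back, yields the desired form with strictly positive constant term $y_0 \geq \eta > 0$.
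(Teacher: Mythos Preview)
The paper does not give its own proof of this lemma: it is quoted as Corollary~3 of \cite{AsadiC0GM21} and invoked as a black box in the quantifier-elimination step of the algorithm for distributionally memoryless policies. There is therefore no proof in the paper to compare your proposal against.

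Turning to the proposal itself, the ``if'' direction is correct and routine. The ``only if'' direction, however, contains a concrete error and a genuine gap. The error is the claim that ``expanding the latter binomially recovers a non-negative combination of products drawn from $\Phi$ itself'': binomial expansion of $(g_j-\epsilon)^k$ produces alternating signs (already $(g_j-\epsilon)^2 = g_j^2 - 2\epsilon\, g_j + \epsilon^2$ has a negative coefficient on $g_j$), so a Handelman certificate on $K_\epsilon$ written in the $f_i$ and $g_j-\epsilon$ does \emph{not} expand into one with non-negative coefficients in the $f_i$ and $g_j$. The gap is the uniform-in-$\epsilon$ degree bound you then invoke for the limiting argument: effective Handelman degree bounds depend on $\min_{K_\epsilon}\phi$, and this minimum may tend to $0$ as $\epsilon\to 0^+$, so the uniformity you need is precisely what is at stake and cannot simply be asserted.

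More seriously, the ``only if'' direction as literally stated (with the requirement $y_0>0$) appears to fail already in one variable. Take $\Phi=\{\,x>0,\ 1-x\ge 0\,\}$, which is satisfiable with bounded solution set $(0,1]$, and $\phi(x)=x$; then $\Phi\Rightarrow\phi>0$. But any polynomial identity $x=y_0+\sum_i y_i\,\phi_i$ with $y_0>0$, $y_i\ge 0$, and each $\phi_i=x^{a_i}(1-x)^{b_i}$ would give, upon evaluation at $x=0$, the contradiction $0\ge y_0>0$, since every $\phi_i(0)\in\{0,1\}$. So either the cited result carries an additional hypothesis not reproduced here, or the ``iff'' (in particular the strictly positive constant term) is a transcription slip; either way your sketch cannot be completed to a proof of the statement exactly as written.
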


To see that Lemma~\ref{lemma:handelman} is applicable, note that in $\Phi_{\strategy},\Phi_2,\Phi_3,\Phi_4$ we have the predicate $I(\mathbf{x})$ on the left-hand-side, in $\Phi_1$ we have $\Init(\mathbf{x})$ and in $\Phi_5$ we have $I(\mathbf{x})\backslash T(\mathbf{x})$. Since all of these contain $\distribution_0$ and all are constrained to only contain distributions, it follows that the sets of satisfying valuations of these predicates are all non-empty and bounded. Hence, Lemma~\ref{lemma:handelman} can be applied to each constraint. On the other hand, notice that the translation in Lemma~\ref{lemma:handelman} implies strict polynomial inequality on the right-hand-side whereas the constraint in eq.~\eqref{eq:hornpoly} may contain a non-strict polynomial inequality. While this leads to incompleteness, it is sufficient in order to make our translation sound and the translation yields a slightly stricter system of purely existentially quantified constraints.

We now describe how the algorithm applies Lemma~\ref{lemma:handelman} to a constraint of the form as in eq.~\eqref{eq:hornpoly}. First, note that Lemma~\ref{lemma:handelman} does not impose a bound on the number of products of affine expressions that might appear in the translation. Hence, in our translation we introduce an additional algorithm parameter $K$ which is an upper bound on the maximal number of affine expressions appearing in products. Let $\textrm{Prod}_K(\Phi) =  \{\prod_{i=1}^t \phi_i \, \mid \, 0\leq t \leq K,\, \phi_i\in\Phi\}$ be the set of all products of at most $K$ affine expressions, $M_K = |\textrm{Prod}_K(\Phi)|$ be the number of such products and $\textrm{Prod}_K(\Phi) = \{\phi_1,\dots,\phi_{M_K}\}$.

For any constraint of the form as in eq.~\eqref{eq:hornpoly}, we introduce fresh symbolic variables $y_0,y_1,\dots,y_{M_K}$ and translate it into the system of purely existentially quantified constraints
\begin{equation*}
	\begin{split}
		&(y_0 > 0) \land (y_1 \geq 0) \land \dots \land (y_N \geq 0) \land \\
		&(\textrm{polyexp}(\mathbf{x}) \equiv_H y_0 + y_1 \cdot \phi_1(\mathbf{x}) + \dots + y_{M_K} \cdot \phi_{M_K}(\mathbf{x})).
	\end{split}
\end{equation*}
Here, $\textrm{polyexp}(\mathbf{x}) \equiv_H y_0 + y_1 \cdot \phi_1(\mathbf{x}) + \dots + y_{M_K} \cdot \phi_{M_K}(\mathbf{x})$ is a notation for the set of equalities over template variables and $y_0,y_1,\dots,y_{M_K}$ which equate the constant coefficient and coefficients of all monomials over $\{x_1,\dots,x_k\}$ on two sides of the equivalence.

This yields a system of purely existentially quantified constraints over the symbolic template variables as well as fresh symbolic variables introduced by the translation. The algorithm then feeds the resulting purely existentially quantified system to an off-the-shelf SMT solver, and any solution gives rise to a concrete instance of an initial distribution $\distribution_0\in\Init$, a distributionally memoryless policy $\strategy$ and a correct affine distributional reach-avoid certificate $\certificate$ which formally proves distributional reach-avoidance.

The following theorem establishes soundness and an upper bound on the computational complexity of our algorithm.

\begin{theorem*}
	\emph{Soundness}: Suppose that the algorithm returns an initial distribution $\distribution_0\in\Init$, a distributionally memoryless policy $\strategy$ and an affine inductive distributional reach-avoid certificate $\certificate$. Then, the MDP $\MDP$ satisfies existential-$(T,H)$-reach-avoidance under $\strategy$.
	
	\emph{Complexity}: The runtime of the algorithm is in PSPACE in the size of the encoding of the MDP, $\Init$, $T$, $H$ and the template size parameter $N_I \in \mathbb{N}$.
\end{theorem*}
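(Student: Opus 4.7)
The plan is to mirror the proof of Theorem~3, accounting for the two new ingredients specific to the general-policy algorithm: the policy template is now a quotient of affine expressions in the current distribution rather than a single symbolic constant per state-action pair, and the quantifier-elimination step of Section~5 is replaced by the Handelman-type translation based on Lemma~9. Since Lemma~9 yields only a one-directional translation (existence of non-negative multipliers implies the polynomial entailment, but the converse fails for non-strict inequalities and bounded degree), this naturally explains why only soundness, and not relative completeness, is claimed.

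For soundness, I would argue as follows. Assume the SMT solver returns a satisfying assignment to the purely existentially quantified system from Step~3. This assignment determines concrete values for $\mu_0$, the policy template coefficients $\alpha_i, \alpha^j_i$, the ranking-function coefficients $r_i$, the invariant coefficients $b^j_i$, and the auxiliary Handelman multipliers $y_0, y_1, \dots, y_{M_K}$ for each constraint. Applying the ``if'' direction of Lemma~9 to each of $\Phi_\pi, \Phi_1, \dots, \Phi_5$ individually shows that all of the universally quantified constraints built in Step~2 are satisfied; the strict-inequality conclusion of Lemma~9 trivially implies the non-strict conclusions we actually need. Once Step~2 holds, $\mu_0 \in \Init$, the template $\pi$ induces a well-defined distributionally memoryless policy, and $\certificate = (R, I)$ is a valid distributional $(T,H)$-reach-avoid certificate from $\mu_0$ under $\pi$. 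Invoking the soundness direction of Theorem~2 then yields existential-$(T,H)$-reach-avoidance under $\pi$.

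The main technical obstacle I anticipate is establishing the well-definedness and correctness of the quotient policy template. Clearing denominators inside $\mathrm{step}(\mathbf{x})$ (needed when encoding $\Phi_2$ and $\Phi_5$ as polynomial implications of the form in Eq.~(3)) is sound only if $\textrm{den}_{s_i}(\mathbf{x})$ remains strictly positive on every distribution reachable under the synthesized policy. This is precisely ensured by the conjunct $\forall \mathbf{x}.\, I(\mathbf{x}) \Rightarrow \textrm{den}_{s_i}(\mathbf{x}) \geq 1$ built into $\Phi_\pi$, combined with the inductiveness of $I$ captured by $\Phi_2$. In conjunction with $\sum_j \textrm{num}_{s_i,a_j}(\mathbf{x}) = \textrm{den}_{s_i}(\mathbf{x})$ and $\textrm{num}_{s_i,a_j}(\mathbf{x}) \geq 0$ (also inside $\Phi_\pi$), this shows that for every $\mathbf{x} \in I$ the quotients $p_{s_i,a_j}(\mathbf{x})$ form a valid probability distribution over actions at $s_i$, so $\pi$ is a genuine distributionally memoryless policy. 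A careful bookkeeping argument handling the multiplicative rearrangement is needed here.

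For the complexity claim, I would proceed as in Theorem~3. Steps~1 and~2 produce an encoding whose size is polynomial in the input description and in the template size $N_I$. The Handelman-based translation introduces $M_K = |\textrm{Prod}_K(\Phi)|$ auxiliary multipliers per constraint, which is polynomial once the degree bound $K$ is fixed. The final query is a sentence in the existential first-order theory of the reals of size polynomial in the input, which can be decided in PSPACE by Canny's algorithm. Composing these bounds yields the overall PSPACE complexity in the size of the encoding of $\MDP$, $\Init$, $T$, $H$, and~$N_I$.
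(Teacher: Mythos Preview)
Your proposal is correct and follows essentially the same route as the paper: a satisfying assignment from Step~3 is lifted via Lemma~9 (the Handelman-type translation) to a solution of the Step~2 constraints, which in turn certify that $\mu_0\in\Init$, that $\pi$ is a distributionally memoryless policy, and that $(R,I)$ is a valid distributional reach-avoid certificate; soundness then follows (implicitly via Theorem~2), and the complexity argument is the same PSPACE bound via the existential theory of the reals with $K$ fixed. Your treatment of the denominator-positivity and probability-normalisation constraints in $\Phi_\pi$ is more explicit than the paper's proof, which simply asserts that the Step~2 system ``encodes that $\pi$ is a distributionally memoryless strategy''; this extra bookkeeping is a welcome clarification rather than a different argument.
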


\begin{proof}
	{\em Soundness.}  Suppose that the algorithm returns an initial distribution $\distribution_0\in\Init$, a distributionally memoryless policy $\strategy$ and an affine inductive distributional reach-avoid certificate $\certificate$. This means that Step~3 of the algorithm computed a solution $(\distribution_0, \strategy, R, I, y_0, y_1, \dots, y_N)$ to the system of constraints constructed in Step~3 upon quantifier elimination. By Lemma~\ref{lemma:handelman}, this means that $(\distribution_0, \strategy, R, I)$ form a solution to the system of constraints constructed in Step~2. Since this system encodes that $\distribution_0 \in \Init$, that $\strategy$ is a distributionally memoryless strategy and that $(R,I)$ form a correct distributional reach-avoid certificate, we conclude that theorem follows.

	\smallskip\noindent{\em Complexity.} Steps~1 and~2 as well as the quantifier elimination procedure run in time which is polynomial in the size of the input and the template size $N_I$ (note that the parameter $K$ is assumed to be fixed) and yield a system of constraints which is polynomial in the size of the input. Thus, the query that is given to an SMT-solver is a sentence in the existential first-order theory of the reals of size polynomial in the problem input size, which can be solved in PSPACE. Therefore, the combined computational complexity of the algorithm is in PSPACE.
\end{proof}

\noindent{\bf Extension to unit/universal Policy Synthesis} We now show how the above algorithm can be extended to solve unit and universal Policy Synthesis problems as well. For unit Policy Synthesis, the initial distribution $\mu_0$ is given, hence in Step~1 of the algorithm we do not need to fix a template for it and in Step~2 we remove the constraint~$\Phi_{\textrm{init}}$. For universal Policy Synthesis, we now need distributional reach-avoidance to hold from all initial distributions in $\Init$. Hence, in Step~1 we again do not need to fix a template for $\mu_0$ whereas in Step~2 we set $\Phi_1 \equiv \forall \mathbf{x} \in \mathbb{R}^{n} (\Init(\mathbf{x}) \Rightarrow I(\mathbf{x}))$. In both cases, the rest of the algorithm proceeds analogously as above. Moreover, as in the theorem abpve, both algorithms are sound, relatively complete and of PSPACE runtime in the size of the problem.

\smallskip\noindent{\bf Extension to Policy Verification} The above algorithm is also easily extended to solve the unit/existential/universal Policy Verification problems, assuming that a policy $\pi$ to be verified is distributionally memoryless and conforms to the template considered above. Since in the case of verification problems the policy $\pi$ is given and need not be computed, in Step~1 above we need not fix the templates $p_{s_i,a_j}$ for the policy. We also remove the constraint $\Phi_\pi$ in Step~2. The rest of the algorithms proceed analogously as in the Policy Synthesis case, and analogously as in the theorem above we can show that they are sound, relatively complete and of PSPACE runtime.

\section{Benchmark Details}\label{app:models}

\begin{table}[t]
	\caption{
		Summary of the properties of all considered gridworld models.
		From left to right we list the name, number of states, actions, and transitions, followed by number of initial, goal, limited, and forbidden states.
	} \label{table:model_summary}
	\begin{tabular}{cccccccc}
		       Name         & $|S|$ & Act. & Trans. & I & G & L & F \\
		\midrule
		 \textbf{Running}   &   7   &  19  &   24   & 1 & 1 & 1 & 0 \\
		 \textbf{TwoInit}   &   7   &  18  &   22   & 2 & 1 & 3 & 0 \\
		  \textbf{Double}   &  11   &  30  &   36   & 2 & 2 & 2 & 0 \\
		 \textbf{Slippery}  &  12   &  37  &   48   & 1 & 1 & 3 & 0 \\
		 \textbf{Grid} 5x4  &  15   &  29  &   36   & 1 & 1 & 3 & 0 \\
		 \textbf{Grid} 8x8  &  32   &  99  &  111   & 1 & 1 & 3 & 0 \\
		\textbf{Grid} 20x10 &  88   & 280  &  292   & 2 & 1 & 9 & 4
	\end{tabular}
\end{table}

\begin{table}[t]
	\caption{
		Results on the additional models.
		For each model, we list, from left to right, the time used for invariant generation, the fastest of three solver executions, and the total number of variables, constraints, and operations in the query sent to the SMT solver.
		The first line for each model is the policy verification query, the second line is the policy synthesis query.
		T/O denotes a timeout after 10 minutes.
	} \label{table:results_summary}
	\centering\small\setlength{\tabcolsep}{5pt}
	\begin{tabular}{rccccc}
		            Model & Inv. &  SMT  & Var. & Con. & Ops  \\
		\midrule
		 \textbf{TwoInit} &  1s  & $<$1s &  64  &  81  & 562  \\
		                  &  2s  & $<$1s &  81  & 104  & 759  \\
		\textbf{Slippery} &  3s  & $<$1s &  94  & 121  & 856  \\
		                  & 12s  &  11s  & 130  & 168  & 1318
	\end{tabular}
\end{table}

\begin{figure}[t]
	\centering
	\begin{tikzpicture}
		\foreach [count=\i from 0,evaluate=\i as \x using {mod(\i,3)},evaluate=\i as \y using -floor(\i / 3)] \l in {
			I,X,G,%
			L,T,L,%
			I,X,O%
		}
			\node[box\l] at (\x,\y) {};
	\end{tikzpicture}
	\caption{The model \textbf{TwoInit}.} \label{fig:app:example_start}
\end{figure}

\begin{figure}[t]
	\centering
	\begin{tikzpicture}
		\foreach [count=\i from 0,evaluate=\i as \x using {mod(\i,5)},evaluate=\i as \y using -floor(\i / 5)] \l in {
			I,O,L,O,G,%
			X,s,L,X,X,%
			I,O,R,G,X%
		}
			\node[box\l] at (\x,\y) {};
	\end{tikzpicture}
	\caption{The model \textbf{Double}.}
\end{figure}

\begin{figure}[t]
	\centering
	\begin{tikzpicture}
		\foreach [count=\i from 0,evaluate=\i as \x using {mod(\i,5)},evaluate=\i as \y using -floor(\i / 5)] \l in {
			I,X,t,s,s,%
			O,X,t,s,s,%
			s,s,t,X,G%
		}
			\node[box\l] at (\x,\y) {};
	\end{tikzpicture}
	\caption{The model \textbf{Slippery}.}
\end{figure}

\begin{figure}[t]
	\centering
	\begin{tikzpicture}
		\foreach [count=\i from 0,evaluate=\i as \x using {mod(\i,5)},evaluate=\i as \y using -floor(\i / 5)] \l in {
			I,X,O,O,G,%
			O,s,t,X,O,%
			D,X,X,X,U,%
			R,R,r,R,U%
		}
			\node[box\l] at (\x,\y) {};
	\end{tikzpicture}
	\caption{The model \textbf{Grid} 5x4.} \label{fig:app:example_end}
\end{figure}

\begin{figure}
	\begin{equation*}
		10^{-6} \cdot
		\begin{bmatrix}
			93980 & 2634  & 2564  & 798   & 24   \\
			0     & 20724 & 48298 & 29624 & 1354 \\
			0     & 15531 & 42539 & 39530 & 2400 \\
			0     & 2598  & 10778 & 77854 & 8770 \\
			0     & 0     & 0     & 0     & 10^6
		\end{bmatrix}
	\end{equation*}
	\caption{
		The transition structure of \textbf{Insulin}.
	} \label{fig:app:insulin}
\end{figure}

\begin{figure}
	\begin{equation*}
		\begin{bmatrix}
			\tfrac{1}{80}  & \tfrac{19}{60} & \tfrac{3}{40}   & \tfrac{19}{60} & \tfrac{67}{240} \\
			\tfrac{1}{80}  & \tfrac{1}{20}  & \tfrac{41}{120} & \tfrac{19}{60} & \tfrac{67}{240} \\
			\tfrac{1}{16}  & \tfrac{1}{4}   & \tfrac{3}{8}    & \tfrac{1}{4}   & \tfrac{1}{16}   \\
			\tfrac{1}{80}  & \tfrac{1}{20}  & \tfrac{7}{8}    & \tfrac{1}{20}  & \tfrac{1}{80}   \\
			\tfrac{33}{80} & \tfrac{9}{20}  & \tfrac{3}{40}   & \tfrac{1}{20}  & \tfrac{1}{80}
		\end{bmatrix}
	\end{equation*}
	\caption{
		The transition structure of \textbf{PageRank}.
	} \label{fig:app:pagerank}
\end{figure}

In this section, we provide the details of the additional models used in our evaluation. 

For the smaller models, we provide a pictorial representation in Figures~\ref{fig:app:example_start}-\ref{fig:app:example_end}, for \textbf{TwoInit}, \textbf{Double}, \textbf{Slippery}, and \textbf{Grid} 5x4, respectively. The picture for \textbf{Running} can be found in the main paper. The notation in these pictures is the same as in the main body. That is, the initial distribution is indicated by $I$, where the robots are uniformly distributed among all these cells. From any cell, any robot can move horizontally or vertically to an adjacent cell via non-deterministic moves, as long as the adjoining cell is not marked $X$. Cells marked $X$ are obstacles that cannot be moved to. Cells marked $S$ are stochastic where $10\%$ of the robots remain in the cell while remaining $90\%$ can move to adjoining cell. Similarly, $s$ is a stochastic cell where the move could slip to an orthogonally adjacent cell of the destination. Cells with arrows are strong currents, where each robot has no choice but to move to the next cell.

Cells shaded in orange, called limited cells, are distributional obstacles, i.e.\ at any point only $10\%$ of all robots in the swarm may be in the set of cells shaded orange. This will be part of the safety constraint. Finally, $G$ marks goal cells. The problem is to go from $I$ to $G$ (at least $90\%$ of the swarm must reach $G$), while the dynamics must follow the stochastic constraints, and at the same time avoid obstacles $X$ and satisfy distributional constraints in limited cells. Recall that these induce a \emph{distributional} reach-avoid property, which cannot be expressed in classical logics such as PCTL$^\ast$.

For the larger models, namely \textbf{Grid} 8x8 and 20x10, we do not include a picture to avoid clutter, however defining properties of these can be found in Table~\ref{table:model_summary}. Further, in Table~\ref{table:model_summary}, we give for each model more of the defining properties, including number of states, actions, transitions. This is followed by number of Initial states (denoted I), number of goal states (denoted G), where at least $90\%$ of the swarm must reach. $L$ denoted the number of limited cells (coloured in orange in the Figures), with distributional obstacles. Moreover, \emph{forbidden} states (denoted F in the table) mean that no probability mass at all may enter this region (we use such forbidden states only in the largest Grid example that we tested on).

Further, in Table~\ref{table:results_summary} we also provide additional results regarding our SMT query, for two of the examples. For more details, especially regarding the larger models, we direct the interested reader to the concrete implementation, which generates the models from a readable string input.


For completeness, we also show generated strategies for these models in Figures~\ref{fig:app:example_strat_start}-\ref{fig:app:example_strat_end}.
Note that on each run the resulting strategies likely are different due to randomized initialization of SMT solvers.
We also discuss the overall process in more detail in the following section.

We also provide the transition structure of the two models taken from the literature, \textbf{Insulin} and \textbf{PageRank}, in Figures~\ref{fig:app:insulin} and \ref{fig:app:pagerank}, respectively. Intuitively, \textbf{Insulin} models how the body handles insulin being injected. For \textbf{PageRank}, each state represents a webpage and edges hyperlinks between the pages, while the respective probability models the probability of following these hyperlinks. More details about these models can be found in \cite{DBLP:journals/jacm/AgrawalAGT15,DBLP:conf/qest/ChadhaKVAK11}.

\begin{figure}
	\small
	\begin{verbatim}
		p_0_0: d:10523455/4398046511104
		       s:4398035987649/4398046511104
		p_2_1: d:0 l:0 s:0 u:1
		p_1_1: l:0 r:1 s:0
		p_2_2: s:262143/262144 u:1/262144
		p_0_1: d:0 r:1 s:0 u:0
		p_0_2: s:8191/8192 u:1/8192
	\end{verbatim}
	\caption{
		The strategy synthesized for \textbf{TwoInit}.
	} \label{fig:app:example_strat_start}
\end{figure}

\begin{figure}
	\small
	\begin{verbatim}
		p_0_2: r:1/4 s:3/4
		p_1_2: l:0 r:0 s:0 u:1
		p_2_1: d:1 l:0 s:0 u:0
		p_0_0: r:1 s:0
		p_1_0: d:1/32 l:1/32 r:0 s:15/16
		p_2_0: d:1/2 l:1/2 r:0 s:0
		p_1_1: d:0 r:1/64 s:63/64 u:0
		p_3_0: l:0 r:1 s:0
	\end{verbatim}
	\caption{
		The strategy synthesized for \textbf{Double}.
	}
\end{figure}

\begin{figure}
	\small
	\begin{verbatim}
		p_2_2: l:1 s:0 u:0
		p_4_0: d:1 l:0 s:0
		p_0_1: d:1 s:0 u:0
		p_4_1: d:1/16 l:1/8 s:0 u:13/16
		p_0_0: d:5273/20096 s:14823/20096
		p_1_2: l:1/2 r:0 s:1/2
		p_0_2: r:1/2 s:15/32 u:1/32
		p_2_0: d:0 r:1 s:0
		p_3_1: l:0 r:1 s:0 u:0
		p_2_1: d:0 r:0 s:0 u:1
		p_3_0: d:1 l:0 r:0 s:0
	\end{verbatim}
	\caption{
		The strategy synthesized for \textbf{Slippery}.
	} \label{fig:app:example_strat_end}
\end{figure}

\section{Insight into a Concrete Query}\label{app:smt}

To give the reader a feeling for the structure of the SMT queries, we provide some details on the generated queries for the (small) \textbf{Grid} 5x4 example.
In Figure~\ref{fig:app:grid_constraints}, we show the actual query sent to the SMT solver.
Note that this set of constraints already excludes all $\approx 100$ non-negativity constraints that are added on top.
In Figure~\ref{fig:app:grid_solution}, we then show the solution as reported by the SMT solver.
Finally, Figure~\ref{fig:app:strategy} shows the strategy we derive from that solution.
\begin{figure*}
\scriptsize
\begin{verbatim}
-9*f_027/10 - f_029 + f_032 - f_033*inv_1_1 - f_038 + inv_1_1 = 0
-9*f_047/10 - f_049 + f_052 - f_053*inv_1_1 - f_058 + rank_1 = 0
-9*f_067/10 - f_069 + f_072 - f_073*inv_1_1 - f_078 - 1 = 0
-f_001 + inv_1_1 + inv_1_2 = 0
-f_002 - f_012 - f_013*inv_1_15 + f_018 = 0
-f_003 - f_012 - f_013*inv_1_12 + f_018 = 0
-f_004 - f_012 - f_013*inv_1_16 + f_018 = 0
-f_005 - f_012 - f_013*inv_1_7 + f_018 = 0
-f_006 - f_012 - f_013*inv_1_10 + f_018 - 1 = 0
-f_007 - f_012 - f_013*inv_1_2 + f_018 = 0
-f_008 - f_012 - f_013*inv_1_6 + f_018 = 0
-f_009 + f_012 - f_013*inv_1_1 - f_018 + 1/10 = 0
-f_010 - f_012 - f_013*inv_1_11 + f_018 = 0
-f_011 - f_012 - f_013*inv_1_3 + f_018 = 0
-f_012 - f_013*inv_1_13 + f_018 - f_019 = 0
-f_012 - f_013*inv_1_14 - f_017 + f_018 = 0
-f_012 - f_013*inv_1_4 - f_016 + f_018 = 0
-f_012 - f_013*inv_1_5 + f_018 - f_020 = 0
-f_012 - f_013*inv_1_8 - f_015 + f_018 = 0
-f_012 - f_013*inv_1_9 - f_014 + f_018 - 1 = 0
-f_021 - f_032 - f_033*inv_1_15 + f_038 + inv_1_14 = 0
-f_022 - f_032 - f_033*inv_1_12 + f_038 + inv_1_16 = 0
-f_023 - f_032 - f_033*inv_1_16 + f_038 + inv_1_15 = 0
-f_024 - f_032 - f_033*inv_1_7 + f_038 + inv_1_10 = 0
-f_025 - f_032 - f_033*inv_1_10 + f_038 + inv_1_12 = 0
-f_026 - f_032 - f_033*inv_1_2 + f_038 + inv_1_2*p_0_0_s + inv_1_3*p_0_0_d = 0
-f_028 - f_032 - f_033*inv_1_6 + f_038 + inv_1_2*p_1_1_l/20 + 9*inv_1_3*p_1_1_l/10 + inv_1_4*p_1_1_r/20 +
    inv_1_6*p_1_1_s + inv_1_8*p_1_1_l/20 + 19*inv_1_9*p_1_1_r/20 = 0
-f_030 - f_032 - f_033*inv_1_11 + f_038 + inv_1_11*p_3_0_s + inv_1_13*p_3_0_r + inv_1_8*p_3_0_l = 0
-f_031 - f_032 - f_033*inv_1_3 + f_038 + inv_1_2*p_0_1_u + inv_1_3*p_0_1_s + inv_1_4*p_0_1_d +
    inv_1_6*p_0_1_r = 0
-f_032 - f_033*inv_1_14 - f_037 + f_038 + inv_1_13*p_4_1_u + inv_1_14*p_4_1_s + inv_1_15*p_4_1_d = 0
-f_032 - f_033*inv_1_4 - f_036 + f_038 + inv_1_5 = 0
-f_032 - f_033*inv_1_5 + f_038 - f_040 + inv_1_7 = 0
-f_032 - f_033*inv_1_8 - f_035 + f_038 + inv_1_11*p_2_0_r + inv_1_8*p_2_0_s + inv_1_9*p_2_0_d = 0
-f_032 - f_033*inv_1_9 - f_034 + f_038 + inv_1_3*p_2_1_l/20 + inv_1_3*p_2_1_u/20 + inv_1_5*p_2_1_u/20 +
    9*inv_1_6*p_2_1_l/10 + 9*inv_1_8*p_2_1_u/10 + inv_1_9*p_2_1_l/20 + inv_1_9*p_2_1_s = 0
-f_041 - f_052 - f_053*inv_1_15 + f_058 + rank_15 = 0
-f_042 - f_052 - f_053*inv_1_12 + f_058 + rank_12 = 0
-f_043 - f_052 - f_053*inv_1_16 + f_058 + rank_16 = 0
-f_044 - f_052 - f_053*inv_1_7 + f_058 + rank_7 = 0
-f_045 - f_052 - f_053*inv_1_10 + f_058 + rank_10 = 0
-f_046 - f_052 - f_053*inv_1_2 + f_058 + rank_2 = 0
-f_048 - f_052 - f_053*inv_1_6 + f_058 + rank_6 = 0
-f_050 - f_052 - f_053*inv_1_11 + f_058 + rank_11 = 0
-f_051 - f_052 - f_053*inv_1_3 + f_058 + rank_3 = 0
-f_052 - f_053*inv_1_14 - f_057 + f_058 + rank_14 = 0
-f_052 - f_053*inv_1_4 - f_056 + f_058 + rank_4 = 0
-f_052 - f_053*inv_1_5 + f_058 - f_060 + rank_5 = 0
-f_052 - f_053*inv_1_8 - f_055 + f_058 + rank_8 = 0
-f_052 - f_053*inv_1_9 - f_054 + f_058 + rank_9 = 0
-f_061 - f_072 - f_073*inv_1_15 + f_078 - rank_14 + rank_15 = 0
-f_062 - f_072 - f_073*inv_1_12 + f_078 + rank_12 - rank_16 = 0
-f_063 - f_072 - f_073*inv_1_16 + f_078 - rank_15 + rank_16 = 0
-f_064 - f_072 - f_073*inv_1_7 + f_078 - rank_10 + rank_7 = 0
-f_065 - f_072 - f_073*inv_1_10 + f_078 + rank_10 - rank_12 = 0
-f_066 - f_072 - f_073*inv_1_2 + f_078 - p_0_0_d*rank_3 - p_0_0_s*rank_2 + rank_2 = 0
-f_068 - f_072 - f_073*inv_1_6 + f_078 - p_1_1_l*rank_2/20 - 9*p_1_1_l*rank_3/10 - p_1_1_l*rank_8/20 -
    p_1_1_r*rank_4/20 - 19*p_1_1_r*rank_9/20 - p_1_1_s*rank_6 + rank_6 = 0
-f_070 - f_072 - f_073*inv_1_11 + f_078 - p_3_0_l*rank_8 - p_3_0_r*rank_13 - p_3_0_s*rank_11 + rank_11 = 0
-f_071 - f_072 - f_073*inv_1_3 + f_078 - p_0_1_d*rank_4 - p_0_1_r*rank_6 - p_0_1_s*rank_3 - 
    p_0_1_u*rank_2 + rank_3 = 0
-f_072 - f_073*inv_1_14 - f_077 + f_078 - p_4_1_d*rank_15 - p_4_1_s*rank_14 - p_4_1_u*rank_13 + rank_14 = 0
-f_072 - f_073*inv_1_4 - f_076 + f_078 + rank_4 - rank_5 = 0
-f_072 - f_073*inv_1_5 + f_078 - f_080 + rank_5 - rank_7 = 0
-f_072 - f_073*inv_1_8 - f_075 + f_078 - p_2_0_d*rank_9 - p_2_0_r*rank_11 - p_2_0_s*rank_8 + rank_8 = 0
-f_072 - f_073*inv_1_9 - f_074 + f_078 - p_2_1_l*rank_3/20 - 9*p_2_1_l*rank_6/10 - p_2_1_l*rank_9/20 -
    p_2_1_s*rank_9 - p_2_1_u*rank_3/20 - p_2_1_u*rank_5/20 - 9*p_2_1_u*rank_8/10 + rank_9 = 0
f_027 - f_032 - f_033*inv_1_13 + f_038 - f_039 + inv_1_13 = 0
f_047 - f_052 - f_053*inv_1_13 + f_058 - f_059 + rank_13 = 0
f_067 - f_072 - f_073*inv_1_13 + f_078 - f_079 = 0
p_0_0_d + p_0_0_s = 1
p_0_1_d + p_0_1_r + p_0_1_s + p_0_1_u = 1
p_1_1_l + p_1_1_r + p_1_1_s = 1
p_2_0_d + p_2_0_r + p_2_0_s = 1
p_2_1_l + p_2_1_s + p_2_1_u = 1
p_3_0_l + p_3_0_r + p_3_0_s = 1
p_4_1_d + p_4_1_s + p_4_1_u = 1
\end{verbatim}
\caption{
	Non-trivial constraints generated for the \textbf{Grid} 5x4 example.
}\label{fig:app:grid_constraints}
\end{figure*}

\begin{figure*}
\scriptsize
\begin{minipage}{0.5\textwidth}
\begin{verbatim}
f_001: 0
f_002: 3/80
f_003: 1300498740643/80
f_004: 3/80
f_005: 1300498740803/80
f_006: 1300498740723/80
f_007: 1/10
f_008: 11/160
f_009: 0
f_010: 0
f_011: 181/2560
f_012: 397/80
f_013: 2
f_014: 1363/80
f_015: 0
f_016: 1461998188643/80
f_017: 3/80
f_018: 2637
f_019: 0
f_020: 1300498740803/80
f_021: 0
f_022: 8128117129
f_023: 0
f_024: 0
f_025: 1
f_026: 27775/2097152
f_027: 0
f_028: 80749710969/14073748835532800
f_029: 0
f_030: 0
f_031: 90095/121634816
f_032: 4603
f_033: 1
f_034: 2373649911403/6871947673600
f_035: 0
f_036: 1009371549
f_037: 3/160
f_038: 4603
f_039: 0
f_040: 0
f_041: 249/16
f_042: 764043012923/64
f_043: 473/16
f_044: 373893390317/32
f_045: 377957448429/32
f_046: 63869918880030172383/2048
f_047: 15
f_048: 127739837760060225887/4096
f_049: 1
f_050: 63869918746453404743/2048
f_051: 2043837404160964531791/65536
f_052: 0
f_053: 95/64
f_054: 1995934962913802811/64
f_055: 63869918746453435463/2048
f_056: 1995935833058349597/64
f_057: 25/16
f_058: 31186483828141533
f_059: 3143/2048
f_060: 369829332205/32
f_061: 7/16
f_062: 37/64
f_063: 7/16
f_064: 19/32
f_065: 51/32
f_066: 0
f_067: 55/4
f_068: 564738462784209000817/175921860444160
f_069: 383/2048
f_070: 319349593732267022717/10240
f_071: 0
f_072: 11
f_073: 1/64
f_074: 1/1342177280
f_075: 14717/10240
f_076: 1995935006646875843/64
f_077: 23/16
f_078: 18
f_079: 1917/10240
f_080: 19/32
\end{verbatim}
\end{minipage}
\begin{minipage}{0.5\textwidth}
\begin{verbatim}
inv_1_1: -42111/32
inv_1_10: -8128115814
inv_1_11: 210563/160
inv_1_12: -8128115813
inv_1_13: 210563/160
inv_1_14: 1316
inv_1_15: 1316
inv_1_16: 1316
inv_1_2: 42111/32
inv_1_3: 1347567/1024
inv_1_4: -9137487363
inv_1_5: -8128115814
inv_1_6: 84223/64
inv_1_7: -8128115814
inv_1_8: 210563/160
inv_1_9: 1307
p_0_0_d: 5555/6144
p_0_0_s: 589/6144
p_0_1_d: 0
p_0_1_r: 1871855/1900544
p_0_1_s: 0
p_0_1_u: 28689/1900544
p_1_1_l: 63/64
p_1_1_r: 1/8796093022208
p_1_1_s: 137438953471/8796093022208
p_2_0_d: 0
p_2_0_r: 1
p_2_0_s: 0
p_2_1_l: 3/4
p_2_1_s: 16777215/67108864
p_2_1_u: 1/67108864
p_3_0_l: 0
p_3_0_r: 1
p_3_0_s: 0
p_4_1_d: 0
p_4_1_s: 0
p_4_1_u: 1
rank_1: 63869918880029888735/2048
rank_10: -31186484082143181
rank_11: -65222878
rank_12: -31186483955141366
rank_13: -31186483828139593
rank_14: -31186483828139578
rank_15: -31186483828139564
rank_16: -31186483828139550
rank_2: 153
rank_3: 138
rank_4: -62625
rank_5: -31186484336146813
rank_6: 124
rank_7: -31186484209144997
rank_8: -65222863
rank_9: -32611424
\end{verbatim}
\end{minipage}
\caption{
	One solution the SMT solver found for the query in Figure~\ref{fig:app:grid_constraints}.
} \label{fig:app:grid_solution}
\end{figure*}

\begin{figure*}
\scriptsize
\begin{verbatim}
p_1_1: l:63/64 r:1/8796093022208 s:137438953471/8796093022208
p_0_1: d:0 r:1871855/1900544 s:0 u:28689/1900544
p_2_1: l:3/4 s:16777215/67108864 u:1/67108864
p_4_1: d:0 s:0 u:1
p_0_0: d:5555/6144 s:589/6144
p_2_0: d:0 r:1 s:0
p_3_0: l:0 r:1 s:0
\end{verbatim}
\caption{
	The strategy derived from the SMT solution in Figure~\ref{fig:app:grid_solution}.
} \label{fig:app:strategy}
\end{figure*}

\end{document}